%%%%%%%% ICML 2020 EXAMPLE LATEX SUBMISSION FILE %%%%%%%%%%%%%%%%%

\documentclass{article}

% Recommended, but optional, packages for figures and better typesetting:
\usepackage{microtype}
\usepackage{graphicx}
\usepackage{subfigure}
\usepackage{booktabs} % for professional tables

\usepackage{hyperref}
\usepackage{url}
\usepackage{xcolor}
\usepackage{amsthm}
\usepackage{amsmath}
\usepackage{amssymb}
\usepackage{graphicx} 
\usepackage{booktabs} 
\usepackage{float}

\newtheorem{theorem}{Theorem}
\newtheorem{lemma}{Lemma}

% newcommands

% \newcommand{\figref}[1]{\figurename~\ref{#1}}

\DeclareMathOperator{\divergence}{div}
\newcommand{\trace}[1]{\text{tr}\left[#1\right]}
\newcommand{\figref}[1]{\figurename~\ref{#1}}
\newcommand{\tableref}[1]{\tablename~\ref{#1}}
\usepackage[utf8]{inputenc}

% hyperref makes hyperlinks in the resulting PDF.
% If your build breaks (sometimes temporarily if a hyperlink spans a page)
% please comment out the following usepackage line and replace
% \usepackage{icml2020} with \usepackage[nohyperref]{icml2020} above.
\usepackage{hyperref}

% Attempt to make hyperref and algorithmic work together better:

% Use the following line for the initial blind version submitted for review:
% \usepackage{icml2020}

% If accepted, instead use the following line for the camera-ready submission:
\usepackage[accepted]{icml2020}

% The \icmltitle you define below is probably too long as a header.
% Therefore, a short form for the running title is supplied here:
\icmltitlerunning{Equivariant Flows: Exact Likelihood Generative Learning for Symmetric Densities}

\begin{document}

\twocolumn[
\icmltitle{Equivariant Flows: Exact Likelihood Generative Learning for Symmetric Densities}

% It is OKAY to include author information, even for blind
% submissions: the style file will automatically remove it for you
% unless you've provided the [accepted] option to the icml2020
% package.

% List of affiliations: The first argument should be a (short)
% identifier you will use later to specify author affiliations
% Academic affiliations should list Department, University, City, Region, Country
% Industry affiliations should list Company, City, Region, Country

% You can specify symbols, otherwise they are numbered in order.
% Ideally, you should not use this facility. Affiliations will be numbered
% in order of appearance and this is the preferred way.
\icmlsetsymbol{equal}{*}

\begin{icmlauthorlist}
\icmlauthor{Jonas Köhler}{equal,fumacs}
\icmlauthor{Leon Klein}{equal,fumacs}
\icmlauthor{Frank Noé}{fumacs,fuphy,rice}
\end{icmlauthorlist}

\icmlaffiliation{fumacs}{Freie Universität Berlin, Department of Mathematics and Computer Science.}
\icmlaffiliation{fuphy}{Freie Universität Berlin, Department of Physics.}
\icmlaffiliation{rice}{Rice University, Department of Chemistry}

\icmlcorrespondingauthor{Jonas Köhler}{jonas.koehler@fu-berlin.de}
\icmlcorrespondingauthor{Frank Noé}{frank.noe@fu-berlin.de}

% You may provide any keywords that you
% find helpful for describing your paper; these are used to populate
% the "keywords" metadata in the PDF but will not be shown in the document
\icmlkeywords{Machine Learning, ICML}

\vskip 0.3in
]

% this must go after the closing bracket ] following \twocolumn[ ...

% This command actually creates the footnote in the first column
% listing the affiliations and the copyright notice.
% The command takes one argument, which is text to display at the start of the footnote.
% The \icmlEqualContribution command is standard text for equal contribution.
% Remove it (just {}) if you do not need this facility.

%\printAffiliationsAndNotice{}  % leave blank if no need to mention equal contribution
\printAffiliationsAndNotice{\icmlEqualContribution} % otherwise use the standard text.

\begin{abstract}
Normalizing flows are exact-likelihood generative  neural networks which approximately transform  samples from a simple prior distribution to samples of the probability distribution of interest.  
Recent work showed that such generative models can be utilized in statistical mechanics to sample equilibrium states of many-body systems in physics and chemistry. 
To scale and generalize these results, it is essential that the natural symmetries in the probability density – in physics defined by the invariances of the target potential – are built into the flow.  
We provide a theoretical sufficient criterion showing that the distribution generated by \textit{equivariant} normalizing flows is invariant with respect to these symmetries by design. 
Furthermore, we propose building blocks for flows which preserve symmetries which are usually found in physical/chemical many-body particle systems.
Using benchmark systems motivated from molecular physics, we demonstrate that those symmetry preserving flows can provide better generalization capabilities and sampling efficiency.
\end{abstract}

\section{Introduction}
Generative learning using exact-likelihood methods based on invertible transformations has had remarkable success in accurately representing distributions of images \cite{kingma2018glow}, audio \cite{oord2017parallel} and 3D point cloud data \cite{LiuQiGuibas_FlowNet3D,noe2019boltzmann}.

Recently, \textit{Boltzmann Generators} (BG) \cite{noe2019boltzmann} have been introduced for sampling Boltzmann type distributions $\rho'(x) \propto \exp(-u(x))$ of high-dimensional many-body problems, such as valid conformations of proteins. 

This approach is widely applicable in the physical sciences, and has also been employed in the sampling of spin lattice states \cite{Nicoli_PRE_UnbiasedSampling,LiWang_PRL18_NeuralRenormalizationGroup} and nuclear physics models \cite{Albergo_PRD19_FlowLattice}. In contrast to typical generative learning problems, the target density $\rho'(x)$ is specified by definition of the many-body energy function $u(x)$ and the difficulty lies in learning to sample it efficiently. BGs do that by combining an exact-likelihood method that is trained to  approximate the Boltzmann density $\rho'(x)$, and a statistical mechanics algorithm to reweigh the generated density to the target density $\rho'(x)$. 
%It has been demonstrated that such BGs can be trained to efficiently sample prototypical many-body systems such as proteins in implicit solvent and condensed matter systems \cite{noe2019boltzmann}. 

% In order to make further progress, it is essential to develop exact-likelihood generative models that respect the symmetries of $u(x)$, for example invariance of the energy with respect to global rotation or permutations of identical particles.  In this work, we pursue this goal by making the following contributions:
Physical systems of interest usually comprise symmetries, such as invariance with respect to global rotations or permutations of identical elements. As we show in experiments ignoring such symmetries in flow-based approaches to density estimation and enhanced sampling, e.g. using BGs, can lead to inferior results which can be a barrier for further progress in this domain. In our work we thus provide the following contributions:
\begin{itemize}
    \item We show how symmetry-preserving generative models, satisfying the exact-likelihood requirements of Boltzmann generators, can be obtained via \textit{equivariant flows}.
    \item We show that symmetry preservation can be critical for success by showing experiments on highly symmetric many-body particle systems. Concretely, equivariant flows are able to approximate the system's densities and generalize beyond biased data, whereas approaches based on non-equivariant normalizing flows cannot.
    \item We provide a numerically tractable and efficient implementation of the framework for many-body particle systems utilizing gradient flows derived from a simple mixture potential. 
\end{itemize}
While this work focuses mostly on applications in the physical sciences the results could provide a takeaway towards a greater ML audience: studying symmetries of target distributions and considering them in the architecture of a density estimation / sampling mechanism can lead to better generalization and can even be critical for successful learning.
% \begin{itemize}
% \item A sufficient criterion and a construction principle to construct exact-likelihood generative models for symmetric densities over $\mathbb{R}^{n}$ based on \textit{equivariant flows}.
% \item A numerically efficient implementation of the framework for symmetric many-body particle systems.
% \item An efficient dynamics function based on Gaussian kernels, which can be use for continuous normalizing flows. It is easy to regularize, parameter-efficient and allows a fast and analytic divergence computation.
% \item Empirical evidence, that such symmetric flows provide better generalization in the case of stiff many-body systems with symmetric energies, compared to non-symmetric approaches or classic sampling.
% \end{itemize}

\section{Related Work}
\label{sec:related-work}
\paragraph{Statistical Mechanics}
The workhorse for sampling Boltzmann-type distributions $p(x)\propto \exp(-u(x))$ with known energy function $u(x)$ are Molecular dynamics (MD) and Markov-Chain Monte-Carlo (MCMC) simulations. MD and MCMC take local steps in configurations $x$, are guaranteed to sample from the correct distribution for infinitely long trajectories, but are subject to the \textit{rare event sampling problem}, i.e. the get stuck in local energy minima of $u(x)$ for long time. Statistical mechanics has developed many tools to speed up rare events by adding a suitable bias energy to $u(x)$ and subsequently correcting the generated distribution by reweighing or Monte-Carlo estimators using the ratio of true over generated density, e.g. \citep{Torrie_JCompPhys23_187,Bennett_JCP76_BAR,LaioParrinello_PNAS99_12562,WuEtAL_PNAS16_TRAM}. These methods can all speed up MD or MCMC sampling significantly, but here we pursue sampling of the equilibrium density with flows.
%by Boltzmann-generating flows \citep{noe2019boltzmann}.

\paragraph{Normalizing Flows}
Normalizing flows (NFs) are diffeomorphisms $f_{\theta} \colon \mathbb{R}^{n} \rightarrow \mathbb{R}^{n}$ which transform samples $z \sim \rho$ from a simple prior density $\rho$ into samples $x = f_{\theta}(z)$ \cite{tabak2010density, tabak2013family, rezende2015variational, papamakarios2019normalizing}.
Denoting the density of the transformed samples $\rho_{f_{\theta}}$, we obtain the probability density of any generated point via the \textit{change of variables} equation:
$$\rho_{f_{\theta}}(x) = \rho\left(f_{\theta}^{-1}(x)\right) \det \frac{\partial f_{\theta}^{-1}(x)}{\partial x}.$$
$\rho_{f_{\theta}}$ is also called the \textit{push-forward} of $\rho$ along $f_{\theta}$.

While flows can be used to build generative models by maximizing the likelihood on a data sample, having access to tractable density is especially useful in variational inference \cite{rezende2015variational, tomczak2016improving, louizos2017multiplicative, berg2018sylvester} or approximate sampling from distributions given by an energy function \cite{oord2017parallel}, which can be made exact using importance sampling \cite{muller2018neural, noe2019boltzmann}.

The majority of NFs can be categorized into two families: (1) Coupling layers \cite{dinh2014nice, dinh2016density, kingma2018glow, muller2018neural}, which are a subclass of autoregressive flows \cite{germain2015made, papamakarios2017masked, huang2018neural, de2019block, durkan2019neural}, and (2)
residual flows \cite{chen2018neural, zhang2018monge, grathwohl2018ffjord, behrmann2018invertible, chen2019residual}. 

Symmetries in flow models have been discussed in the
context of permutations in graphs \cite{liu2019graph}.
A preliminary account of equivariant normalizing flows has been given
in two recent workshop submissions \cite{rezende2019equivariant, kohler2019equivariant}.

% \TODO{copy \& paste from stochastic flow paper?}
% In the past few years many flow models have been developed that generally have to balance different objectives including (i) expressiveness, (ii) efficiency of computing the inverse, and (iii) efficiency of computing the Jacobian. Roughly, flows
% can be categorized into two classes: (1) invertible ResNets \citep{behrmann2018invertible}, of which continuous normalizing flows can be seen as a special case, and (2) autoregressive flows, which include flows with coupling layers such as NICE \citep{DinhDruegerBengio_NICE2015} and RealNVP \citep{dinh2016density} allowing fast computations of forward and inverse transformations as well as flows based on conditionally transforming unary variables following a MAF \citep{papamakarios2017masked} or a IAF \citep{kingma2016improved} framework. \TODO{rewrite normalizing flow section}

\paragraph{Boltzmann-Generating Flows}\label{boltzmann-generating-flows}

While flows and other generative models are typically used for estimating the an unknown density $\rho'$ from samples and then generating new samples from it, BGs know the desired target density
$\rho'(x) \propto \exp(-u(x))$ up to a prefactor and aim at learning to efficiently sample it \cite{noe2019boltzmann}.

A BG combines two elements to achieve this goal:
\begin{enumerate}
    \item An exact-likelihood generative model that generates samples $x_k$ from a density $\rho_{f_{\theta}}$ that approximates the given Boltzmann-type target density $\rho'$.
    \item An algorithm to reweigh the generated density to the target density $\rho'$. For example, using importance sampling the asymptotically unbiased estimator
    of the expectation value of observable $O(x)$ is:
    $$
    \mathbb{E}_{x \sim \rho'}[O] \approx \frac{\sum_k w(x_k) O(x_k)}{\sum_k w(x_k)}, \quad x_{k} \sim \rho_{f_{\theta}},
    $$
    where the importance weights $$w(x_k) = \exp(-u(x_k))/\rho_{f_{\theta}}(x_k)$$ can be computed from the trained flow.
\end{enumerate}
The exact likelihood model is needed in order to be able to conduct the reweighing step. When a flow is used in order to generate asymptotically unbiased samples of the target density, we speak of a Boltzmann-generating flow.

Boltzmann-generating flows are trained to match $\rho_{f_{\theta}} \approx \rho'$ using loss functions that also appear in standard generative learning problems, but due to the explicit availability of $\exp(-u(x))$ their functional form and interpretation changes:% \cite{noe2019boltzmann}:
\begin{enumerate}
\item \textit{KL-training} We minimize the reverse Kullback-Leibler divergence $KL(\rho_{f_{\theta}}\|\rho')$:
$$
\mathcal{L}_{\text{KL}} = \mathbb{E}_{z\sim \rho} \left[
u(f_{\theta}(z))
- \log\left|\det\frac{\partial f_{\theta}(z)}{\partial z}\right|
\right].
$$
This approach is also known as energy-based training where the energy corresponding to the generated density is matched with $u(x)$.

\item \textit{ML-training}: If data $\left\{x_{n}\right\}_{n=1 \ldots N}$ from
a data distribution $\rho'_{\text{data}}$ is given that at least represents one or a few high-probability modes of $\rho'$, we can maximize the likelihood under the model, as is typically done when performing density estimation:
%\TODO{expression too long}
\begin{align*}
\mathcal{L}_{\text{ML}} &= \mathbb{E}_{x\sim \rho'_{\text{data}}} \bigg[
-\log \rho\left( f_{\theta}^{-1}(x) \right)\\
& \qquad \qquad \quad
- \log\left|\det\frac{\partial f_{\theta}^{-1}(x)}{\partial x}\right|
\bigg].
\end{align*}
\end{enumerate}
The final training loss is then obtained using a convex sum over both losses, where the mixing parameter $\lambda$ may be changed from $0$ to $1$ during the course of training:
$$
    \mathcal{L} = (1-\lambda)\mathcal{L}_{\text{ML}} + \lambda \mathcal{L}_{\text{KL}}.
$$

\section{Invariant Densities via Equivariant Flows}

In this work we consider densities $\rho, \rho'$ over euclidean vector spaces $\mathbb{R}^{n}$ which are invariant w.r.t. to symmetry transformations e.g. given by rotations and permutations of the space. In other words, we want to construct flows such that both, the prior and the target density share the same symmetries.

More precisely, let $G$ be a group which acts on $\mathbb{R}^{n}$ via a representation $R \colon G \rightarrow GL(n), g \rightarrow R_{g}$ and assume that $\rho$ is invariant w.r.t. $G$, i.e. $\forall g \in G, x \in \mathbb{R}^{n} \colon \rho(R_{g} x) = \rho(x)$. We first remark that for any $g \in G$ the matrix $R_g$ satisfies $\det(R_g) \in \{-1, 1\}$\footnote{All proofs and derivations can be found in the Suppl. Material.}.
% \TODO{SI}
This allows us to formulate our result:
\begin{theorem}\label{thm:sufficient-criterion-classic-flow}
    Let $\rho$ is a density on $\mathbb{R}^{n}$ which is $G$-invariant and $G > H$. If $f$ is a $H$-equivariant diffeomorphism, i.e. $\forall h \in H, x \in \mathbb{R}^{n} \colon f(R_{h} x) = R_{h} f(x)$, then $\rho_{f}$ is $H$-invariant.
\end{theorem}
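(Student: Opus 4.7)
The plan is to unfold the change-of-variables formula for $\rho_f$ at the point $R_h x$ and show that each factor either is invariant under the action of $H$ or contributes a factor of $|\det R_h|=1$. The theorem states $\rho_f$ is $H$-invariant, so my goal is to verify $\rho_f(R_h x) = \rho_f(x)$ for every $h \in H$ and $x \in \mathbb{R}^n$, starting from
\[
  \rho_f(R_h x) \;=\; \rho\bigl(f^{-1}(R_h x)\bigr)\,\Bigl|\det \tfrac{\partial f^{-1}}{\partial x}(R_h x)\Bigr|.
\]

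First I would promote the $H$-equivariance of $f$ to an $H$-equivariance of $f^{-1}$: applying $f^{-1}$ to both sides of $f(R_h y)=R_h f(y)$ and setting $y=f^{-1}(x)$ gives $f^{-1}(R_h x) = R_h f^{-1}(x)$. Next, since $H$ is a subgroup of $G$ and $\rho$ is $G$-invariant, $\rho$ is in particular $H$-invariant, so
\(
  \rho\bigl(f^{-1}(R_h x)\bigr) = \rho\bigl(R_h f^{-1}(x)\bigr) = \rho\bigl(f^{-1}(x)\bigr).
\)
This handles the density factor.

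For the Jacobian factor I would differentiate the identity $f^{-1}(R_h x) = R_h f^{-1}(x)$ in $x$ via the chain rule. The left side yields $J_{f^{-1}}(R_h x)\,R_h$, the right side yields $R_h\,J_{f^{-1}}(x)$. Taking determinants and using the preliminary remark that $\det R_h \in \{-1,1\}$ (noted just before the theorem), the factors $\det R_h$ cancel (or at worst differ by a sign, which is killed by the absolute value), so
\[
  \Bigl|\det \tfrac{\partial f^{-1}}{\partial x}(R_h x)\Bigr| \;=\; \Bigl|\det \tfrac{\partial f^{-1}}{\partial x}(x)\Bigr|.
\]
Substituting both equalities back into the first display gives $\rho_f(R_h x)=\rho_f(x)$.

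None of the steps looks like a genuine obstacle; the whole argument is essentially bookkeeping with the change-of-variables formula. The only place care is needed is the Jacobian step, where one must write the chain rule carefully (it is easy to contract on the wrong side) and must invoke $|\det R_h|=1$ in order to cancel the group factor. It is also worth stating explicitly at the start that $H$-equivariance of $f$ implies $H$-equivariance of $f^{-1}$, since the rest of the proof relies on that observation.
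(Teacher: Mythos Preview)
Your proof is correct and follows essentially the same route as the paper: both arguments unfold the change-of-variables formula, use that $f^{-1}$ inherits $H$-equivariance from $f$, invoke the $G$-invariance of $\rho$ on the density factor, and cancel the Jacobian contribution of $R_h$ via $|\det R_h|=1$. The paper packages the computation through push-forward identities like $\rho_{T_{h^{-1}}\circ f}=\rho_{f\circ T_{h^{-1}}}$, whereas you treat the density factor and the Jacobian factor separately, but the underlying steps and the ingredients used are identical.
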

As a direct consequence if $H < O(n)$, any push-forward of an isotropic normal distribution along a $H$-equivariant diffeomorphism will result in a $H$-invariant proposal density.
%(see SI for proof).

\section{Constructing Equivariant Flows}

In general it is not clear how to define equivariant diffeomorphisms which provide tractable inverses and Jacobians. We will provide a possible implementations based on the recently introduced framework of \textit{continuous normalizing flows} (CNFs) \cite{chen2017continuous}.

\paragraph{Equviariant Dynamical Systems}
CNFs define a dynamical system via a time-dependent vector field $v\colon \mathbb{R}^{n} \times [0, \infty) \rightarrow \mathbb{R}^{n}$. If $v$ is globally Lipschitz, we can map each $z \in \mathbb{R}^{n}$ onto the unique characteristic function $x_{v, z} \colon [0, \infty) \rightarrow \mathbb{R}^{n}$, which solves the Cauchy-problem 
\begin{align*}
    \tfrac{d}{dt} x(t) &= v(x_{v, z}(t), t), \qquad x_{v, z}(0) = z.
\end{align*}
This allows us to define a bijection ${F_{v, T} \colon \mathbb{R}^{n} \rightarrow \mathbb{R}^{n}}$ for each $T \in [0, \infty)$ by setting 
\begin{align*}
F_{v, T}(z) = x_{v, z}(0) + \int_{0}^{T} dt ~ v(x_{v, z}(t), t).
\end{align*}
Given a density $\rho$ on $\mathbb{R}^{n}$, each $T$ defines a push-forward $\rho_{F_{v, T}}$ along $F_{v, T}$, which satisfies 
% the \textit{continuous change of variable} 
\begin{align*}
    \tfrac{d}{dt} \log \rho_{F_{v, t}}(x_{v, z}(t)) = - \text{div}\left(v(x_{v, z}(t), t)\right).
\end{align*}
By following the characteristic this allows to compute the total density change as 
\begin{align*}
\log \frac{\rho_{F_{v, T}}(x_{v, z}(T))}{\rho(x_{v, z}(0))} = - \int_{0}^{T} dt ~ \divergence \left(v(x_{v, z}(t), t)\right).
\end{align*}

Equivariant flows can thus be constructed very naturally:
\begin{theorem}\label{thm:sufficient-criterion-continuous-flow}
    Let $v$ be a $H$-equivariant vectorfield on $\mathbb{R}^{n}$ (not necessarily bijective). Then for each $T \in [0, \infty)$ the bijection $F_{v, T}$ is $H$-equivariant. 
\end{theorem}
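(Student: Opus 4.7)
The plan is to establish equivariance of the flow map by exploiting uniqueness of solutions to the Cauchy problem. Fix $z \in \mathbb{R}^{n}$, $h \in H$, and let $x(t) = x_{v, z}(t)$ denote the characteristic starting at $z$. I would introduce the transported curve $y(t) := R_{h}\, x(t)$ and aim to show that $y$ is itself a solution of the Cauchy problem with initial condition $R_{h}z$.

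The key computation is then a one-line chain rule: since $R_{h}$ is a fixed linear map,
\begin{equation*}
\tfrac{d}{dt} y(t) \;=\; R_{h}\, \tfrac{d}{dt} x(t) \;=\; R_{h}\, v(x(t), t) \;=\; v(R_{h} x(t), t) \;=\; v(y(t), t),
\end{equation*}
where the third equality uses precisely the $H$-equivariance hypothesis on $v$. Combined with $y(0) = R_{h} z$, this shows that $y$ solves the same ODE as $x_{v, R_{h}z}$. Invoking uniqueness of the characteristic (guaranteed by the global Lipschitz assumption on $v$ via Picard--Lindelöf, as already used in the definition of $F_{v,T}$) yields $y(t) = x_{v, R_{h}z}(t)$ for all $t \geq 0$, i.e. $x_{v, R_{h}z}(t) = R_{h}\, x_{v, z}(t)$.

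Evaluating this identity at $t = T$ gives $F_{v, T}(R_{h} z) = R_{h}\, F_{v, T}(z)$, which is the desired equivariance. I do not foresee a genuine obstacle here; the only subtlety is making sure the uniqueness hypothesis is in place so that the two curves $y$ and $x_{v, R_{h}z}$ can be identified, but this is already encoded in the setup of CNFs described above. Note that the argument never uses invertibility of the vector field $v$ itself, only its equivariance and Lipschitz regularity, which matches the statement of the theorem.
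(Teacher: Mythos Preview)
Your proof is correct and follows essentially the same approach as the paper: both arguments show that the transported curve $R_{h}\,x_{v,z}(t)$ is the characteristic through $R_{h}z$, and then evaluate at $t=T$. The paper manipulates the integral form of the flow directly and leaves the uniqueness step implicit, whereas you work with the differential form and invoke Picard--Lindel\"of explicitly; your version is arguably the more careful of the two on that point.
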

Consequently, if $\rho$ is a $G$-invariant density on $\mathbb{R}^{n}$ and ${G > H}$, then each push-forward $\rho_{F_{v, T}}$ is $H$-invariant. 

\paragraph{Equivariant Gradient Fields}
 There has been a significant amount of work in recent years proposing $G$-equivariant functions for different groups acting on $\mathbb{R}^{n}$. A generic implementation however is given by a gradient flow: if $\Phi \colon \mathbb{R}^{n} \rightarrow \mathbb{R}$ is a $G$-invariant function, the vector $\nabla_{x} \Phi$ will transform $G$-equivariantly.

Gradient flows (not necessarily $G$-equivariant) can map any $\rho$ onto any $\rho'$ over $\mathbb{R}^{n}$ as long as both densities do not vanish \cite{benamou2000computational, mccann2001polar} and have been discussed in the context of density estimation \cite{zhang2018monge, papamakarios2019normalizing}. 

\paragraph{Numerical Implementations}
While providing an elegant solution, implementing equivariant flows using continuous gradient flows is numerically challenging due to three aspects. 

First, even if $F_{v, T}$ is invertible assuming exact integration, there are no such guarantees for any discrete-time approximation of the integral, e.g. using Euler or Runge-Kutta integration. Thus, \citeauthor{chen2017continuous} propose adaptive-step solvers, such as Dopri5 \cite{dormand1980family}, which can require hundreds of vector field evaluations to reach satisfying numerical accuracy.

Second, in order to train $v$ via the adjoint method as suggested by \citeauthor{chen2017continuous}, gradients of the loss w.r.t. parameters are obtained via backward integration. However, in general, there are no guarantees that this procedure is stable, which therefore can result in very noisy gradients, leading to long training times and inferior final results \cite{gholami2019anode}. In contrast to this \textit{optimize-then-discretize} (OTD) approach, \citeauthor{gholami2019anode} suggest to unroll the ODE into a fixed-grid sequence and backpropagate the error using classic automatic differentation (AD). Such a \textit{discretize-then-optimize} (DTO) approach will guarantee that gradients are computed correctly, but might suffer from inaccuracy due to the discretization errors as mentioned before. Throughout our experiments, we rely on the latter approach during training and show that for our presented architecture OTD and DTO will yield similar results, while the latter offers a significant speedup per iteration, more robust training and faster convergence.

Finally, computing the divergence of $v$ using off-the-shelf AD frameworks requires $O(n)$ backpropagation passes, which would result in an infeasible overhead for high-dimensional systems \cite{grathwohl2018ffjord}. Thus, \citeauthor{grathwohl2018ffjord} suggest an approximation via the Hutchinson-estimator \cite{Hutchinson1989ASE}. 
This is an unbiased rank-1 estimator of the divergence where variance scales with $O(n)$. 
As we show in our experiments, even for small particle systems, relying on such an estimator will render importance weighing and thus the benefits of Boltzmann generating flows useless, e.g. when used in downstream sampling applications. 
Another approach relies on designing special dynamics functions, in which input dimensions are decoupled and then combine the \texttt{detach}-operator with one backpropagation pass to compute the divergence exactly \cite{chen2019neural}.
For general symmetries as studied in this paper such a decoupling is not possible, without either destroying equivariance of the dynamics function, or enforcing it to be trivial.
Our proposed vector field based on a simple mixture of Gaussian radial basis functions (RBF)  allows computing the divergence numerically exact as one vectorized operation and without relying on AD backward passes.

\paragraph{Relation to Hamiltonian Flows}
If our space decomposes as $\mathbb{R}^{n} = \mathbb{R}^{m} \bigoplus \mathbb{R}^{m}$ where each element is written as ${x = (q, p)}$ and where we call $q$ the generalized position and $p$ the generalized momentum, we can define a time-dependent Hamiltonian $\mathcal{H} \colon \mathbb{R}^{m} \times \mathbb{R}^{m} \times [0, \infty)  \rightarrow \mathbb{R}$, which defines the Hamiltonian system
$$
v(q, p, t) = \left( \frac{\partial \mathcal{H}(q, p, t)}{\partial p}, -\frac{\partial \mathcal{H}(q, p, t)}{\partial q}  \right).
$$
If $\mathcal{H}$ factorizes as $\mathcal{H}(q, p, t) = V(q, t) + \tfrac{1}{2} \|p\|^2$ a numerically stable and finite-time invertible solution of the system is given by \textit{Leapfrog-integration}. Furthermore, due to the symplecticity of $v$, each $F_{v, T}$ will be volume preserving. Unrolling the Leapfrog-integration in finite time, will result in a stack of \textit{NICE}-layers \cite{dinh2014nice} with equivariant translation updates. 

We can always create an artificial Hamiltonian version of any density estimation problem, by augmenting a density $\rho(q)$ on $\mathbb{R}^{n}$ to $\rho(q, p) = \rho(q) \cdot \rho(p | q)$ on $\mathbb{R}^{n} \times \mathbb{R}^{n}$. Due to the interaction between $q$ and $p$ within the flow, we cannot expect that both $\rho(p | q) = \rho(p)$ and $\rho'(p | q) = \rho'(q)$ within a finite number of steps. Thus, if an isotropic normal distribution is used for $\rho(p, q)$, having only access to $\rho'(q)$ will require a variational approximation of $\rho'(p | q)$ \cite{toth2019hamiltonian}. 
\newpage
If $\mathcal{H}$ is $G$-invariant, i.e. $\mathcal{H}(R_{g} q, R_{g} p, t) = \mathcal{H}(q, p, t)$ for all $g \in G, (q,p) \in \mathbb{R}^{m}\times\mathbb{R}^{m}, t \in [0, \infty)$, we see that $v$ will be $G$-equivariant. This results in the recently proposed framework of \textit{Hamiltonian Equivariant Flows} (HEF) \cite{rezende2019equivariant}, which we thus see as a special case of our framework for densities with linearly represented symmetries defined over $\mathbb{R}^{n}$. On the other hand, HEFs can handle more general spaces or symmetries with nonlinear representations -- in contrast to the presented framework -- hence the two approaches are complementary.

For completeness, we note that Hamiltonian flows do not suffer from those numerical complications in the former paragraph, due to symplectic integration and volume preservation. However, in order to compute unbiased estimates of target densities which is essential for physics applications, a variational approximation of $\rho'(p|q)$ cannot be applied.

%the context of sampling particle systems such as done in \cite{noe2019boltzmann}, momenta are not observed and due to the exact like-likelihood constraint variational approximations of $\rho'(p|q)$ cannot be applied.

\section{Sampling of Coupled Particle Systems}

We evaluate the importance of incorporating symmetry into flows when aiming to sample from symmetric densities, by applying the theoretic framework to the problem of sampling coupled many-body systems of interchangeable particles.
Such systems have states $x \in \mathbb{R}^{n}, ~ n=N \cdot D$ consisting of $N$ particles $x_{i}$ with $D \in [2,3]$ degrees of freedom, which are coupled via a potential energy $u(x)$. In thermodynamic equilibrium such a system follows a Boltzmann-type distribution $\rho'(x) \propto \exp(-u(x)).$ Assuming interchangeable particles in vacuum without external field, we obtain three symmetries (S1-3): $u$ (and thus $\rho'$) does not change if we permute particles (S1), rotate the system around the center of mass (CoM) (S2), or translate the CoM by an arbitrary vector (S3).

Due to the simultaneous occurrence of (S1) and (S2) no autoregressive decomposition / coupling layer can be designed to be equivariant. Either a variable split has to be performed among particles or among spatial coordinates, which will break permutation and rotation symmetry respectively. Thus, residual flows are the only class of flows which can be applied here. In this work we will rely on CNFs, design an equivariant vector field by taking the gradient field of an invariant potential function, and then combine theorems 1 and 2 to conclude the symmetry of the proposal density.

\paragraph{Invariant Prior Density}
We first start by designing an invariant prior. By only considering systems with zero CoM symmetry (S3) is easily satisfied. The set of CoM-free systems forms a $(N-1)\cdot D$-dimensional linear subspace $U < \mathbb{R}^{n}$. Equipping $\mathbb{R}^{n}$ with an isotropic normal density $\rho$, implicitly equips $U$ with a normal distribution $\tilde \rho$. We can sample it, by sampling $z \sim \rho$ and projecting on $U$, and evaluate its likelihood for $z \in U$, by computing $\rho(z)$.
% If $f \colon \mathbb{R}^{n} \rightarrow \mathbb{R}^{n}$ is a diffeomorphism with $f(U) = U$, then $f|_{U}$ will also be a diffeomorphism and its volume change computes as:
% \begin{align*}
%     \tilde\rho_{f|_{U}}(x) 
%     &=\tilde \rho(f|_{U}^{-1}(x)) \left|\det  J_{f|_{U}^{-1}}(x) \right| \\
%     & = \rho(f^{-1}(x)) \left|\det J_{f^{-1}}(x) \right|^{N-1 / N}.
% \end{align*}%\TODO{SI}

\paragraph{Equivariant Vector Field}

We design our vector field as the gradient field $v(x(t)) = \nabla_{x(t)} \Phi(x(t))$ of a potential $\Phi \colon \mathbb{R}^{n} \rightarrow \mathbb{R}$. If $\Phi$ is invariant under symmetry transformations (S1-3) it directly implies equivariance of $v$. 

Our invariant potential $\Phi$ is given as a sum of pairwise couplings over particle distances:
\begin{align*}
    \Phi(x(t)) = \sum_{ij} \tilde \Phi(d_{ij}(t),t)
\end{align*}
with $r_{ij}(t) = x_{i}(t) - x_{j}(t), ~ d_{ij}(t) = \|r_{ij}(t)\|$.
This yields per-particle updates
\begin{align*}
    v_{i}(x(t)) &= \sum_{j} v_{ij}(x(t)). 
\end{align*}
For a well-chosen coupling potential $\tilde \Phi(d_{ij}, t)$ we can express
\begin{align}
    v_{ij}(x(t)) &= \underbrace{R(t)^T W  K(d_{ij}(t))}_{\phi(d_{ij}) } \cdot r_{ij}(t),
    \label{eq:gradient-field}
\end{align}
where $K \colon \mathbb{R} \rightarrow \mathbb{R}^{M}$ and $R \colon \mathbb{R} \rightarrow\mathbb{R}^{L}$ are vector-valued functions, each component is given by a Gaussian RBF
and $W \in \mathbb{R}^{T \times M}$ is a trainable weight matrix (see \figref{fig:architecture}). 

% We design our vector field as a \textit{kernel dynamics} such that each particle $x_i$ is updated in the context of another particle $x_j$ according to a time-dependent vector which acts in distance direction
% \begin{align*}
%     v_{ij}(x(t), t) &= \phi(d_{ij}(t), t) \cdot r_{ji}(t)\\
%     &= R(t)^T W  K(d_{ij}(t)) r_{ji}(t), 
% \end{align*}
% with $r_{ij}(t) = x_{i}(t) - x_{j}(t), ~ d_{ij}(t) = \|r_{ij}(t)\|$.
% Here $K \colon \mathbb{R} \rightarrow \mathbb{R}^{K}$ and $R \colon \mathbb{R} \rightarrow\mathbb{R}^{L}$ are vector-valued functions, where each component is given by a Gaussian RBF
% and $W \in \mathbb{R}^{T \times K}$ is a trainable weight matrix.

% Using these forces, a particle update in time is computed as
% \begin{align*}
%     \frac{\partial x_{i}(t)}{\partial t} = v_{i}(x(t), t) = \sum_{j} v_{ij}(x(t), t).
% \end{align*}
% This update is equivariant with respect to global rotations and permutations of particles. In fact it can be interpreted as an equivariant gradient field. Furthermore, it maps mean-free systems on mean-free updates and thus satisfies the requirement of the former paragraph (see \figref{fig:architecture} for architecture).%\TODO{SI}

Using this architecture, the divergence becomes: 
\begin{align*}
\label{eq:analytic-form-log-density}
    \divergence \frac{\partial x(t)}{\partial t} &= 
    \sum_{ij} \frac{\partial \phi(d_{ij}(t), t)}{\partial d_{ij}(t)} d_{ij}(t) + D \cdot \phi(d_{ij}(t)).
\end{align*}
Thus, the gradient and the divergence can be computed \textit{exactly} and \textit{as one vectorized operation} (see Suppl. Material for details).

During training we optimize $W$ and RBF means and bandwidths simultaneously. By keeping weights small and bandwidths large we can control the complexity of the dynamics. As we show in our experiments even a small amount of weight-decay is sufficient to properly optimize the flow with a fixed-grid solver introducing a negligible amount of error during the integration.

\paragraph{Other Invariant Potential Functions}
While $\Phi$ could be modeled by any kind of invariant graph neural networks, such as \textit{SchNet} \cite{schutt2017schnet}, this would require us to 1) use AD in order to compute $\nabla_{x(t)} \Phi(x(t))$ and 2) compute $\Delta_{x(t)} \Phi(x(t))$ at every function evaluation while integrating $v$. This implies the numerical challenges as mentioned before. As we show in the Suppl. Mat. our simple couplings are considerably faster, have a fraction of parameters while consistently outperforming neural network approaches to modeling $\Phi$ for the studied target systems.

\begin{figure}[h]
    \vskip 0.2in
    \begin{center}
    \centerline{\includegraphics[width=\columnwidth]{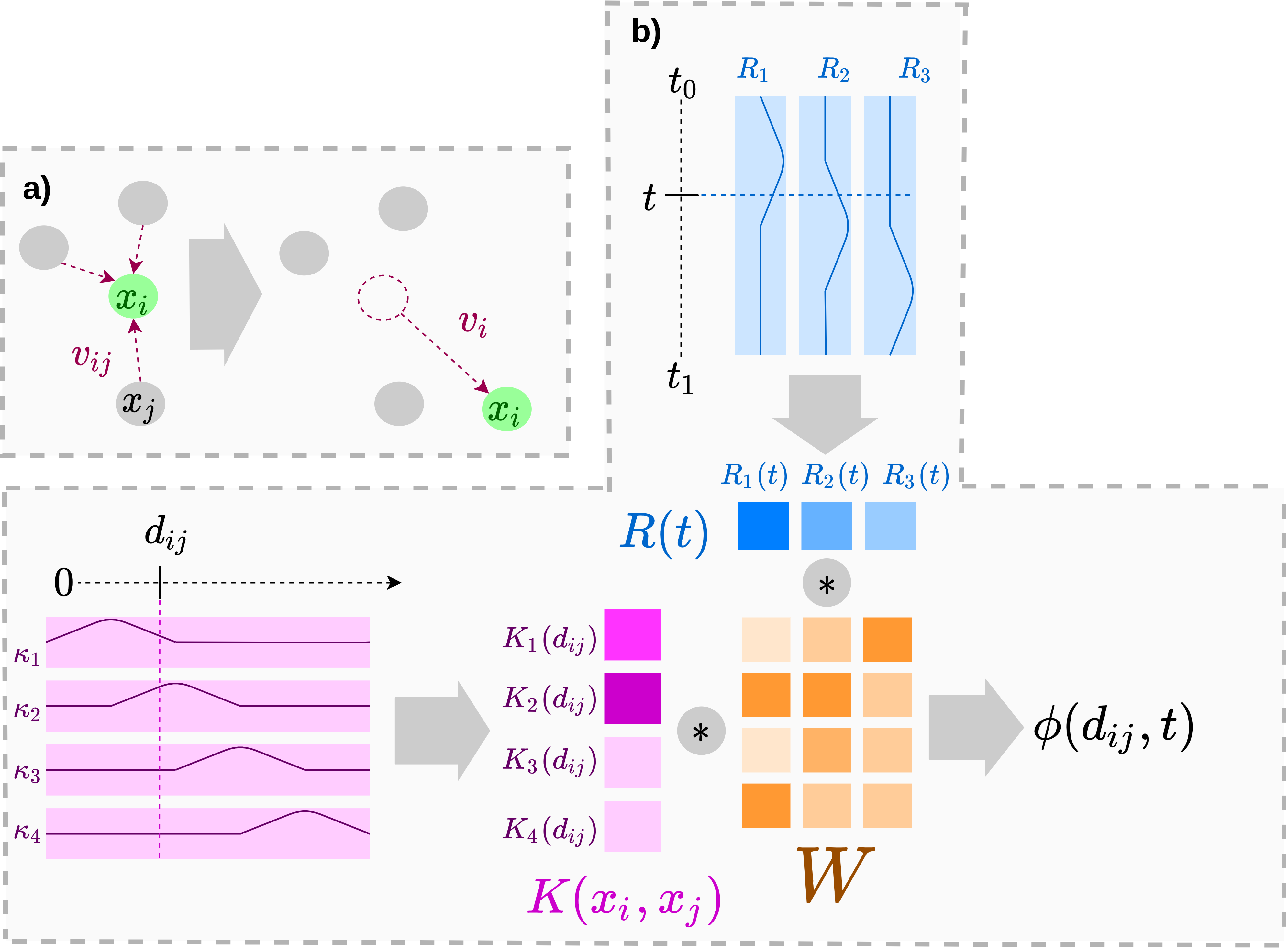}}
    \caption{\textbf{a)} Each particle $x_{i}$ is updated by a weighted sum of radial forces depending on distances and the integration time. \textbf{b)} Time and distances are expanded in a RBF basis and mixed together with a shared weight matrix.}
    \label{fig:architecture}
    \end{center}
     \vskip -0.2in
\end{figure}

\newpage

\section{Benchmark Systems}

We study two systems where all symmetries (S1), (S2), (S3) are present (\figref{fig:target_systems}):

\paragraph{DW-2 / DW-4}
The first system is given by $N \in [2, 4]$ particles with a pairwise \textit{double-well} potential acting
on particle distances
$$
u^{\texttt{DW}}(x) = \frac{1}{2 \tau}\sum\limits_{i,j}
a\, (d_{ij} - d_0) + b\, (d_{ij} - d_0)^2  + c\, (d_{ij} - d_0)^4
$$
for $D=2$, which produces two distinct low energy modes separated by an energy barrier. By coupling multiple particles with such double-well interactions we can create a frustrated system with multiple metastable states. Here $a, b, c$ and $d_0$ are chosen design parameters of the system and $\tau$ the dimensionless temperature.

\paragraph{LJ-13}

The second system is given by the \textit{Lennard-Jones} (LJ) potential with $N=13, ~ D=3$. LJ is a model for solid-state models and rare gas clusters. LJ clusters have complex energy landscapes whose energy minima are difficult to find and sample between. These systems have been extensively studied \cite{wales97LJ} and are good candidates for benchmarking structure generation methods.  
In order to prevent particles to dissociate from the cluster at the finite sampling temperature, we add a small harmonic potential to the CoM.
The LJ potential with parameters $\epsilon$ and $r_{m}$ at dimensionless temperature $\tau$ is defined by 
$$
u^{\texttt{LJ}}(x) = \frac{\epsilon}{2 \tau} \left[ \sum_{i,j} \left( \left(\frac{r_{m}}{d_{ij}}\right)^{12} - 2 \left(\frac{r_{m}}{d_{ij}}\right)^{6} \right)\right].
$$

\begin{figure}[h]
    \vskip 0.2in
    \begin{center}
    \centerline{\includegraphics[width=\columnwidth]{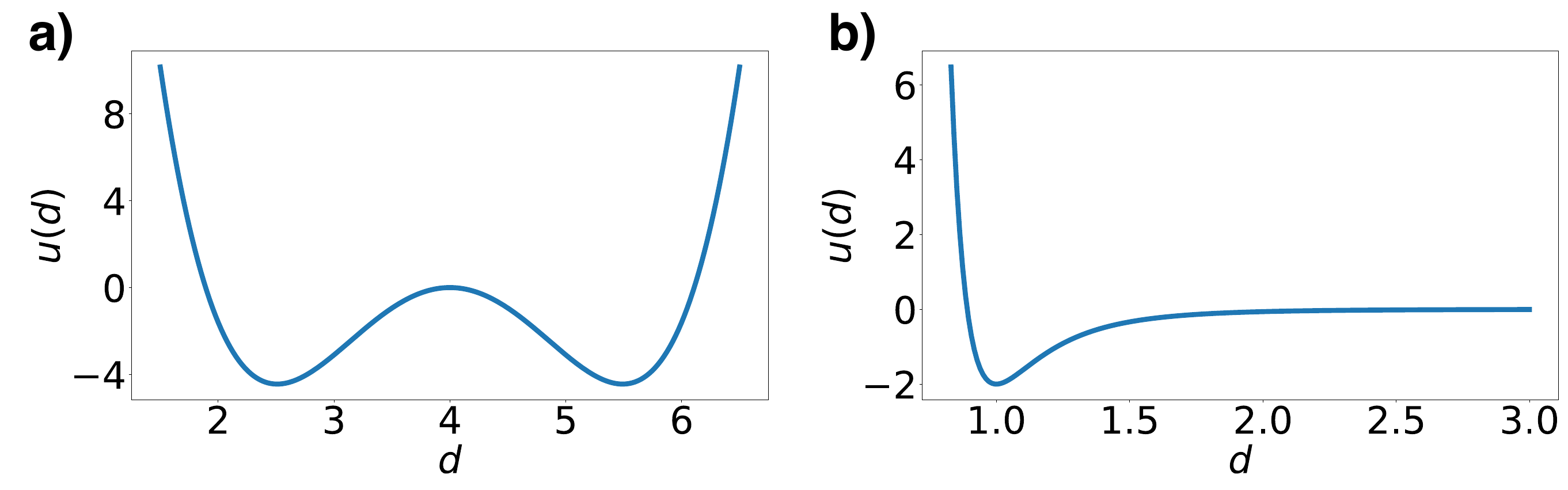}}
    \caption{The two model systems: shown are the energy contributions per distance \textbf{a)} for the \textit{double-well} and  \textbf{b)} the \textit{Lennard-Jones} potential. }
    \label{fig:target_systems}
    \end{center}
     \vskip -0.2in
\end{figure}

\section{Experiments}

% We show the merits of equivariance in flows when sampling symmetric target densities as well as the numerical advantage of using the proposed simple Gaussian mixture potential when sampling many-body particle systems in four experiments.
% % We show the numerical accuracy efficiency of both the proposed framework as well as the given concrete instance in four experiments.

\subsection{Computation of Divergence}
In a first experiment we show that \textit{fast} and \textit{exact} divergence computation can be critical especially when the number of particles grows. 
We compare different ways to estimate the change of log-density: (1) using brute-force computation relying on AD (2) using the Hutchinson estimator described by \citeauthor{grathwohl2018ffjord}, and (3) computing the trace exactly in close form.

Brute-force computation quickly yields a significant overhead per function evaluation during the integration, which makes it impractical for online computations (\figref{fig:trace} c), such as using the flow within a sampling procedure or just for training. If we use Hutchinson estimation, the error grows quickly with the number of particles (\figref{fig:trace} a) and renders reweighing, even for the very simple DW-2 system,  impossible (\figref{fig:trace} b). By having access to an exact closed-form trace, we obtain the best of both worlds: fast computation and the possibility for exact reweighing (\figref{fig:trace} b+c).

\begin{figure}[h]
    \vskip 0.2in
    \begin{center}
    \centerline{\includegraphics[width=\columnwidth]{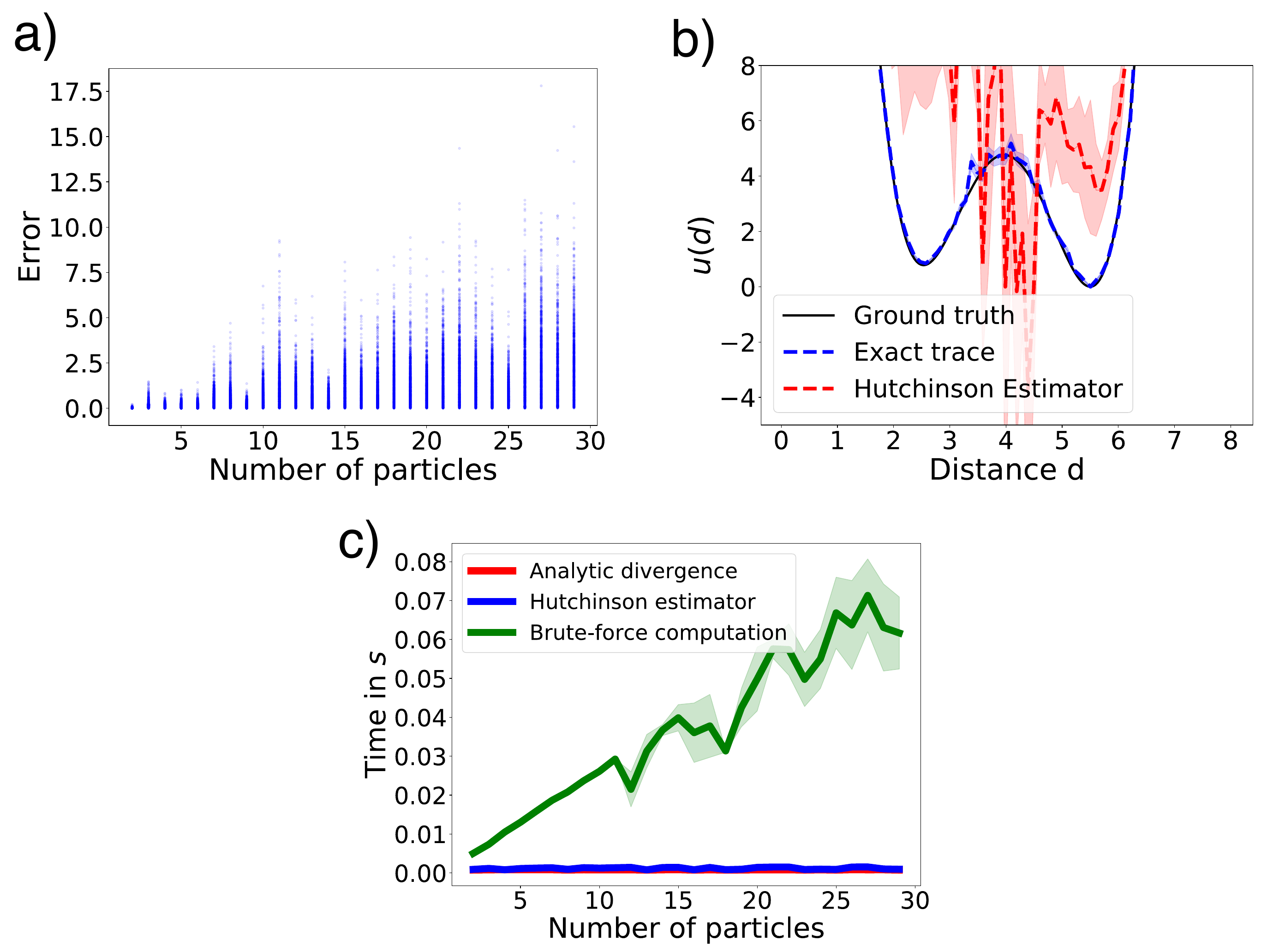}}
    \caption{\textbf{a)} Error of divergence estimates introduced by the Hutchinson estimator with growing number of particles.
    %increasingly scatters with growing number of particles as the variance scales with $O(N\cdot D)$. 
    \textbf{b)} Free-energy profile of a DW-2 potential and importance-weighed estimates. 
    %Noisy estimates render reweighing to the exact target density impossible. 
    \textbf{c)} 
    Wall-clock time of evaluating $v(x(t), t)$ with growing number of particles.
    % Brute-force computation per evaluation of $v(x(t), t)$ quickly becomes prohibitively slow, while evaluation of the exact trace has constant costs.  
    }
    \label{fig:trace}
    \end{center}
     \vskip -0.2in
\end{figure}

\subsection{DTO vs. OTD Optimization}

In this experiment we show that by simply regularizing $W$, e.g. using weight decay, OTD and DTO based optimization of the flow barely shows any difference (\figref{fig:otd_dto} a), while the former quickly results in a significant overhead due to the increasing number of function evaluations required to match the preset numerical accuracy (\figref{fig:otd_dto} b). We compare the OTD implementation presented in \cite{chen2018neural, grathwohl2018ffjord} using the \texttt{dopri5}-option ($\texttt{atol}=10^{-10}, \texttt{rtol}=10^{-5}$) to the DTO implementation given by  \citeauthor{gholami2019anode}  using a fixed grid of 20 steps and 4th-order Runge-Kutta as solver.

\begin{figure}[h]
    \vskip 0.2in
    \begin{center}
    \centerline{\includegraphics[width=\columnwidth]{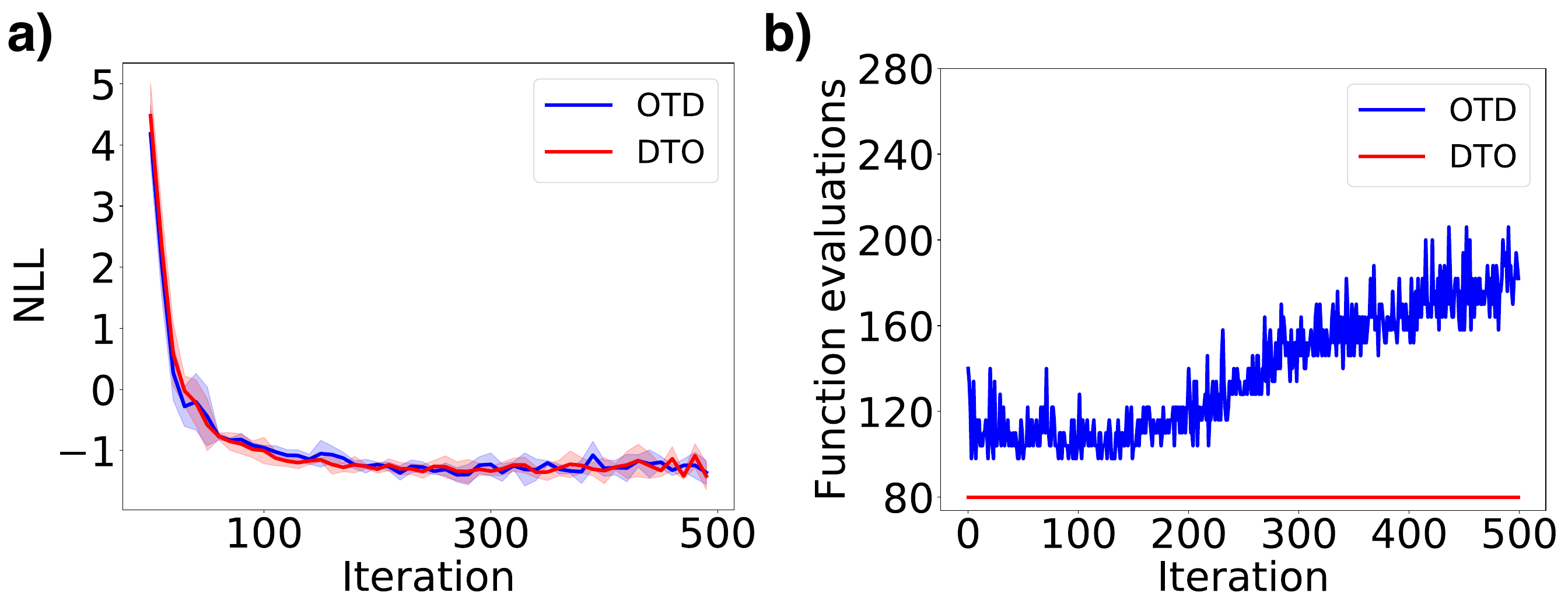}}
  \caption{\textbf{a)} Log-likelihood on test data after training with DTO/OTD for the DW-4 system \textbf{b) } Number of function evaluations increase significantly during training using the OTD approach. The curve for DTO remains flat.}
    \label{fig:otd_dto}
    \end{center}
     \vskip -0.2in
\end{figure}

\subsection{Statistical Efficiency for Density Estimation}\label{sec:density-estimation}

We compare the proposed equivariant flow to a non-equivariant flow where $v(x(t), t)$ is given by a simple fully-connected neural network. As brute-force computation of the divergence quickly becomes prohibitively slow for the LJ-13 system, we rely on Hutchinson-estimation during training and compute the exact divergence only during evaluation.

The training data is generated by taking $10$ / $100$ / $1,000$ / $10,000$ samples from a long MCMC trajectory (throwing away $1,000$ burn-in samples to enforce equilibration). 
% As the long MCMC trajectory consists of $1e6$ samples data augmentation is superfluous.  \TODO{wording...}
%This data is artifically augmented by random rotations and permutations within each iteration.
After training we evaluate the likelihood of the model on an independent 10,000 trajectory.
We train both flows using Adam with weight decay \cite{kingma2014adam, loshchilov2018fixing} until convergence. For the non-equivariant flow we tested both: data augmentation by applying random rotations and permutations, and no data augmentation.

Our results show that an equivariant flow generalizes well to the unseen trajectory even in the low data regime. When applying data augmentation, the non-equivariant flow significantly performs worse (DW-4) or even fails to fit the data at all and remains close to the prior distribution (LJ-13). Without data augmentation yet using strong regularization we observe strong over-fitting behavior: the DW-4 system can only be fitted if trained on amount of data that is close to the full equilibrium distribution, the LJ-13 system cannot be fitted sufficiently at all (\figref{fig:density_estimation}). It is worth to remark that the equivariant flow only requires $620$ trainable parameters in order to achieve this result compared to the $5256$ (DW-4) / $21671$ (LJ-13) parameters of the black-box model.

\begin{figure}[h]
    \vskip 0.2in
    \begin{center}
    \centerline{\includegraphics[width=\columnwidth]{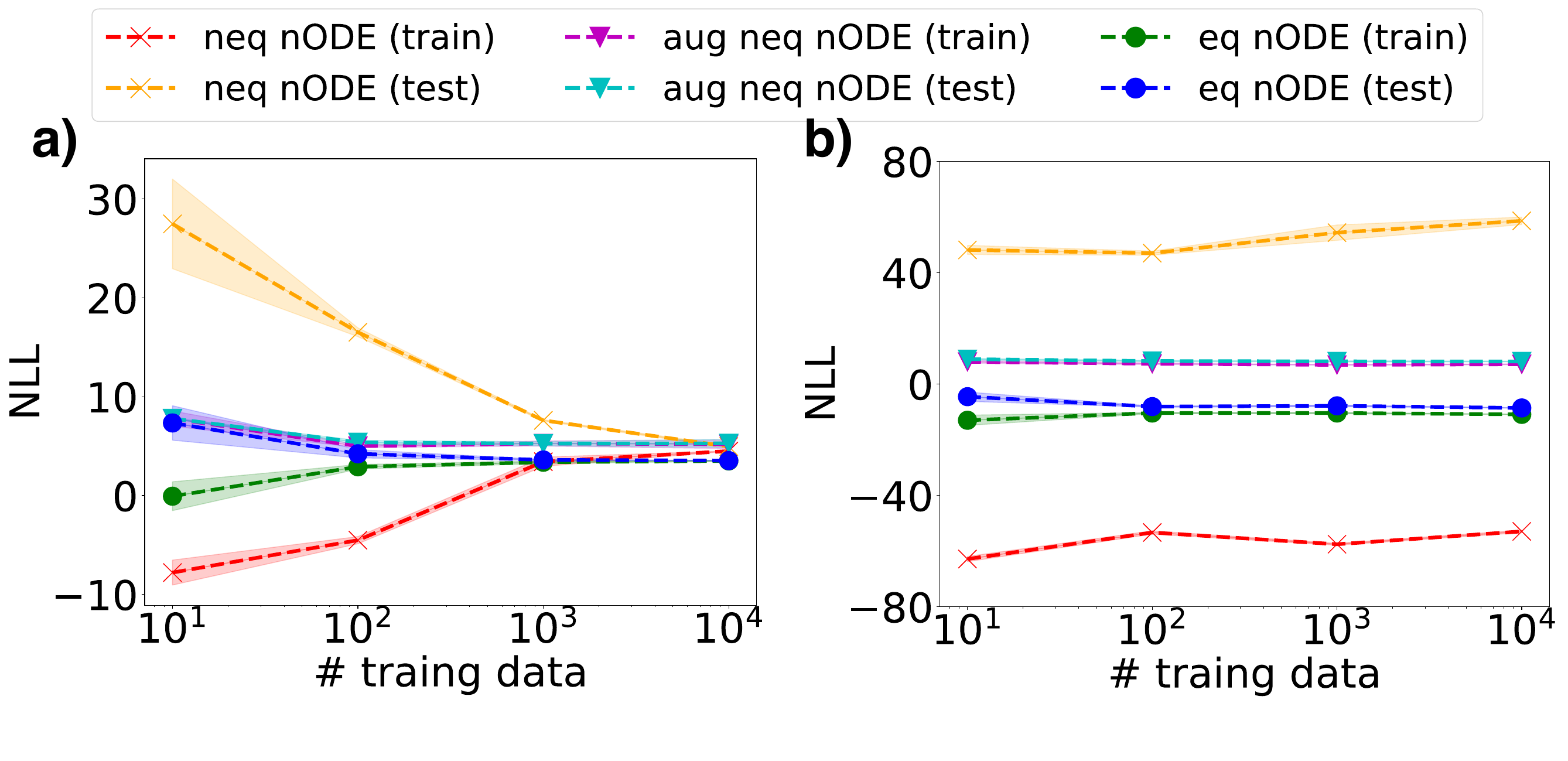}}
    \caption{Log-likelihood on train and test data for both \textbf{a)} the DW-4 and \textbf{b)} the LJ-13 system after training on an increasing number of data points. 
    \textit{eq nODE}: proposed equivariant flow, \textit{neq nODE}: non-equivariant baseline without data augmentation,  \textit{aug neq nODE}: non-equivariant baseline with data augmentation.
    % While the equivariant flow (\textit{eq nODE}) generalizes quickly to unseen trajectories, the non-equivariant flow without data augmentation (\textit{neq nODE}) shows strong overfitting tendency and requires the full range of equilibrium data to fit the distribution of the DW-4 system. Even given more data it is unable the fit the much more complicated LJ-13 system at all. The non-equivariant flow with data augmentation (\textit{aug neq nODE}) fits the DW-4 distribution better, but peforms significantly worse compared to the equivariant flow. It is unable to fit the augmented data of LJ-13 system at all and remains close to the prior.
    }
    \label{fig:density_estimation}
    \end{center}
     \vskip -0.2in
\end{figure}

% In a second experiment, we generate short MCMC trajectories used for training both a BG utilizing the equivariant flow (eqBG) as proposed in this paper and a non-equivariant BG (nBG) based on RealNVP that were used in \citet{noe2019boltzmann}. After ML-loss converged for both models, we fine-tuned them on the energy by increasing $\lambda$ from 0 to 1 during the optimization. The likelihood of both models was evaluated for independently sampled and unseen MCMC trajectories of 10x the size of the training data. 

% While for any setting of $\lambda$ the nBG collapses on the training set and is unable to report sensible likelihoods on the holdout trajectory, the eqBG generalizes reasonably well for the DW-4 system and even achieves a far better likelihood on the holdout data set after KL training for the LJ-20 system. \TODO{does not fit the table anymore}
% \TODO{Insert references to Figs/Table}
% \TODO{Exact numbers of MCMC samples / training detail etc. in appendix? Are these numbers even meaningful without error bars? }

\subsection{Equivariance in Boltzmann-Generating Flows}

In a fourth experiment we compare how equivariance affects normalizing flows when used in the context of Boltzman generators (see section \ref{boltzmann-generating-flows} or \cite{noe2019boltzmann}).
For the DW-4 system we compare our equivariant flow to a non-equivariant one when being trained using both: maximizing likelihood on data and minimizing reverse KL-divergence w.r.t. the target density (for details see Suppl. Material). For the non-equivariant flow we tested both: data augmentation and no data augmentation.

The equivariant flow achieves a significant overlap with the target distribution. This allows the target energies to be reweighed to the ground-truth distribution (see \figref{fig:energies_DW_4}~c) and thus to draw asymptotically unbiased samples.
The non-equivariant flow without data augmentation quickly samples low-energy states. However, as indicated by the reweighted distribution and the high train and test likelihood ($10.85$ and $11.40$ respectively), this is due to collapsing to one mode of the distribution (see \figref{fig:energies_DW_4}~a). As a result asymptotically unbiased sampling will not be possible.
The non-equivariant flow after being trained with data augmentation falls short in both: producing accurate low energy states and thus reweighing to the ground-truth (see \figref{fig:energies_DW_4}~b).

\begin{figure}[h]
    \vskip 0.2in
    \begin{center}
    \centerline{\includegraphics[width=\columnwidth]{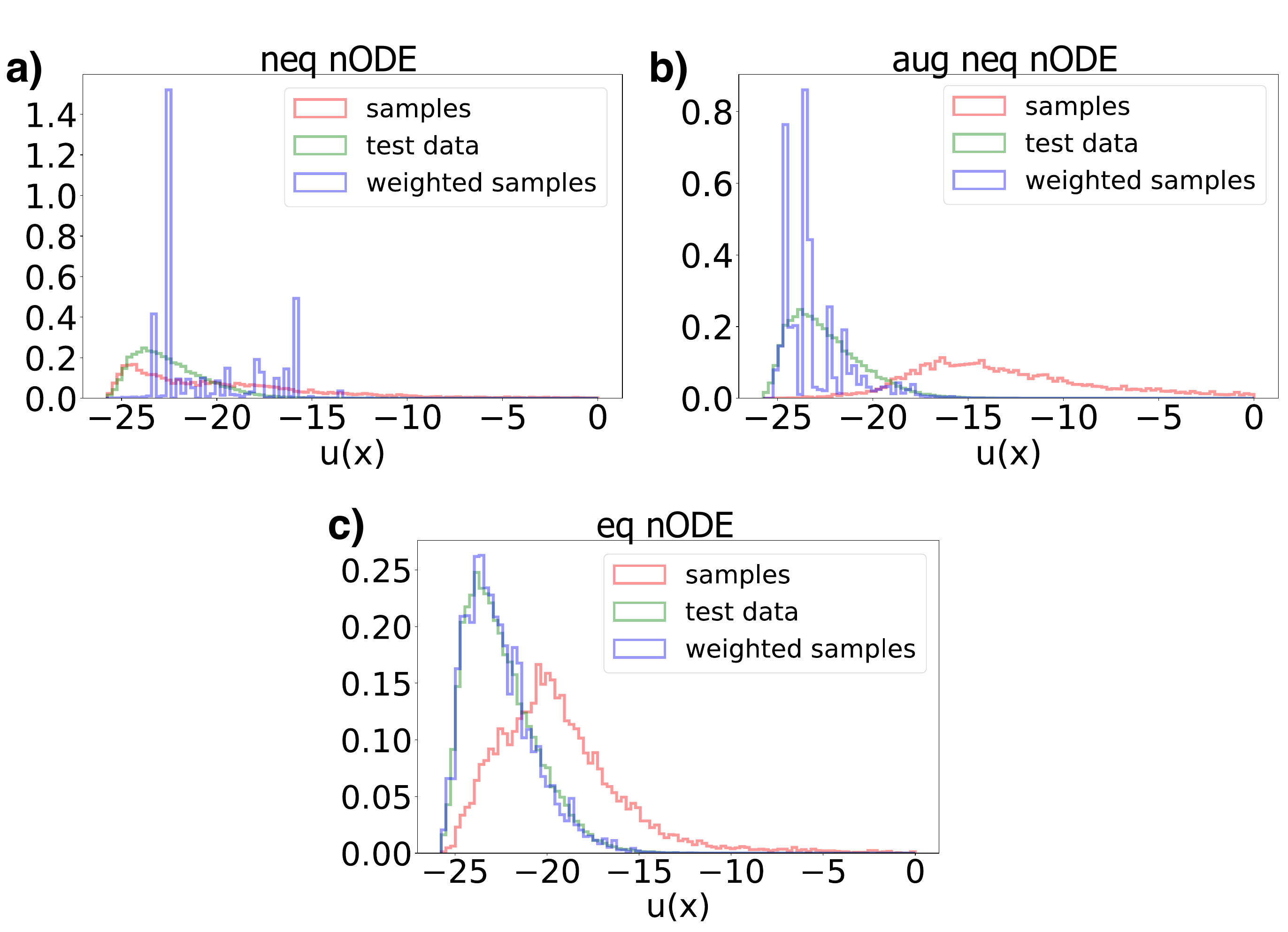}}
    \caption{Energy histograms for samples from the DW-4 system with different models. \textbf{a)} non equivariant nODE, \textbf{b)} non equivariant nODE with data augmentation, and \textbf{c)} proposed equivariant flow.}
    \label{fig:energies_DW_4}
    \end{center}
     \vskip -0.2in
\end{figure}

\subsection{Discovery of Meta-Stable States}\label{sec:Discovery-of-new-meta-stable-states}

In our final experiment, we evaluate to which extend these models help discovering new meta-stable states, which have not been observed in the training data set. Here we characterize metastable states as the set of configurations $x$ that minimize to the same local minimum on the energy surface. Finding new meta-stable states is especially non-trivial for LJ systems with many particles. 

\paragraph{Counting Distinct Meta-Stable States}
Let $\psi$ be the function mapping a state $x$ onto its next meta-stable state $\psi(x)$. 
We implement it by minimizing $x$ w.r.t. $u(x)$ using a non-momentum optimizer until convergence and filtering out saddle-points.
Then we equate two minima $\psi(x) \sim \psi(x')$, whenever they are identical up to rotations and permutations.
To avoid computing the orthogonal Procrustes problem between all minimized structures, we compute the all-distance matrix $M_{d}(\psi(x))$ of each minimum state, sort it in  ascending order to obtain $M_{d, \text{sorted}}(\psi(x))$ and equate two structures $\psi(x) \sim_{\text{approx}} \psi(x')$, whenever ${\|M_{d, \text{sorted}}(x) - M_{d, \text{sorted}}(x')\| < \epsilon},$ where $\epsilon \ll 1$ is a threshold depending on the system. This ensures that $\psi(x) \sim \psi(x') \implies \psi(x) \sim_{\text{approx}} \psi(x')$, however the inverse direction might not hold. Thus, reported numbers on the count of unique minima found remain a lower bound.

\paragraph{DW-4}

For this system, we can fully enumerate those five meta-stable minima between which the system jumps in equilibrium. 
We train both an equivariant flow and a non-equivariant flow on a single minimum state perturbed by a tiny amount of Gaussian noise until convergence. Then we sample $10,000$ structures from both models and compute the set of unique minima.
While the non-equivariant flow model can only reproduce the minimum state it has been trained on, the equivariant flow discovers all minimum states of the system (see \figref{fig:states} a).

\paragraph{LJ-13}

% \begin{table}[ht]
% \caption{Count of unique minima states discovered: displayed are means and standard deviation over 10 independent rounds.}
% \label{tbl:found-minima}
% \vskip 0.15in
% \begin{center}
% \begin{small}
% \begin{sc}
% \begin{tabular}{lccccc}
% \toprule
% & & $u(x)$ \\
% Method & $(-70, -60)$ & $(-80, -70)$ & $(-\infty, -80)$\\

% \midrule

% Training    & $0$& $3$& $0$ \\
% % \midrule
% MCMC        & $1.30\pm2.05$& $3.80\pm2.14$ & $0.50\pm0.50$ \\
% MCMC-long   & $8.30\pm4.43$& $24.00\pm2.97$ & $1.00\pm0.00$ \\
% EQ-Flow         & $4.40\pm1.20$& $16.20\pm3.16$ &$1.00\pm0.00$\\
% % Non-EQ     & $10.7\pm2.83$& $11.7\pm1.00$ &$1.00\pm0.00$\\\
% \end{tabular}
% \end{sc}
% \end{small}
% \end{center}
% \vskip -0.1in
% \end{table}

Finding meta-stable minima with low energies is a much more challenging task for the LJ system. Here we compare the proposed equivariant flow to standard sampling by (1) training on a short equilibrium MCMC trajectory consisting of $1,000$ samples, (2) sampling $1,000$ samples from the generator distribution after training, and (3) counting the amount of unique minima states found according to the procedure described above. The amount of unique minima found is compared to sampling an independent equilibrium MCMC trajectory having the same amount of samples as the training set and a long trajectory with $100,000$ samples.

As can be seen from \tableref{tbl:found-minima} the equivariant flow model clearly outperforms naive sampling in finding low-energy meta-stable states compared to the short MCMC trajectory which had access to the same amount of target energy evaluations. Furthermore, in contrast to the latter, it consistently finds the global minimum state, which has not been present in the training trajectory. It performs closely as good as the long trajectory which had access to 100x more evaluations of the target function. \figref{fig:states} b shows structures of low-energy minima generated by the equivariant flow.

% In this experiment we generate a training set containing 50K MCMC samples and trained a nBG/eqBG with a total amount of 200K energy calls. After training we sample 100K samples from both models and compare how many unique minimum states are found compared to a MCMC trajectory running for 350K energy calls. Minimizing the states in the training set yields only one unique minimum state. While the long MCMC trajectory is not able to escape the first found local minimum, the nBG only discovers 9 unique states, whereas the eqBG produces 9000 unique minima at comparable energies.

\begin{figure}[h]
    \vskip 0.1in
    \begin{center}
    %\centerline{\includegraphics[width=.95\columnwidth]{}}
    \centerline{\includegraphics[width=.95\columnwidth]{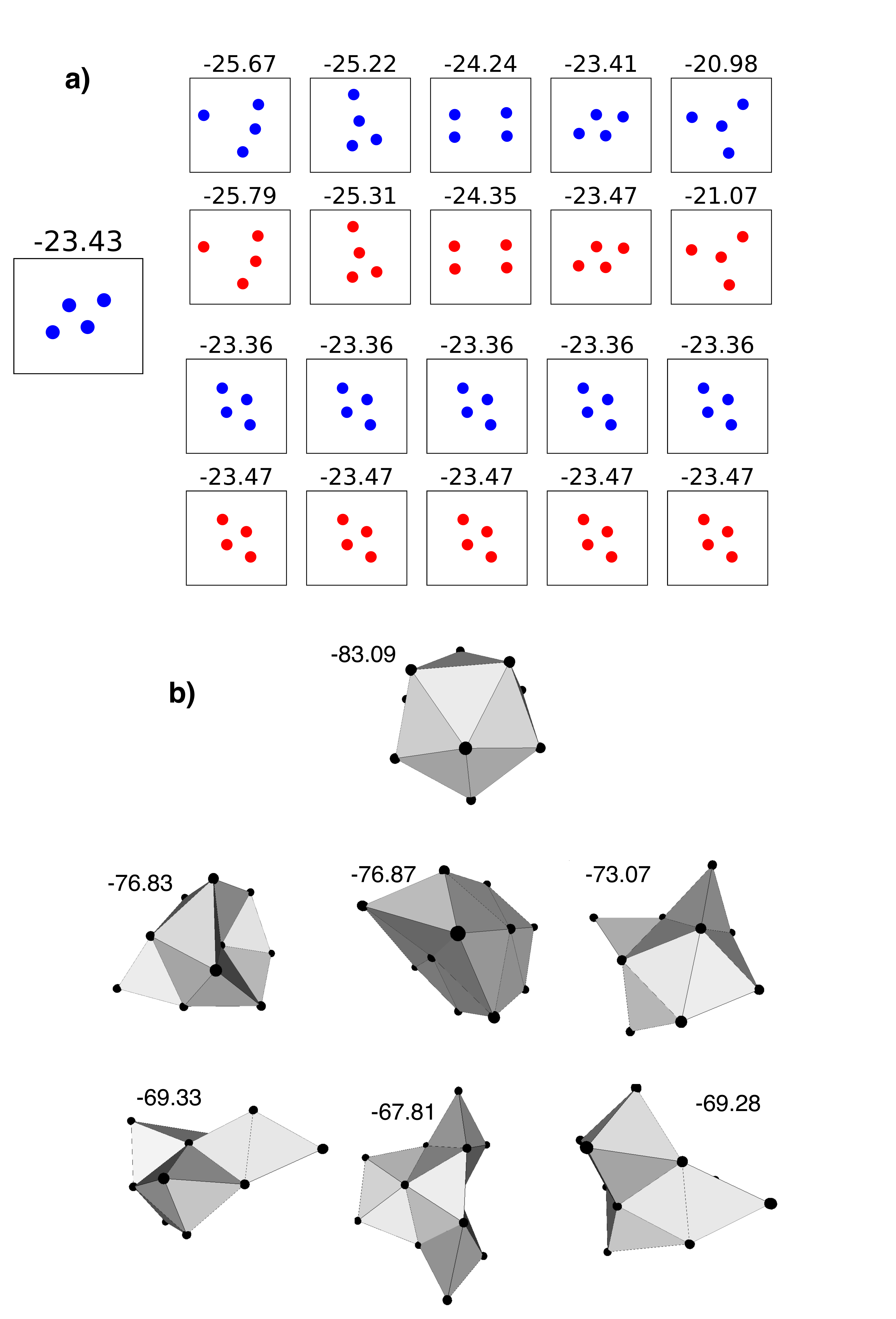}}
    \caption{\textbf{a)} On the left: Minimum state used for training in the DW-4 system. \textit{Upper rows}:  samples from equivariant flow (blue) and corresponding minimum states (red). \textit{Bottom rows}: samples from non-equivariant flow (blue) and corresponding minimum states (red)., \textbf{b)} Exemplary unique minima states from the LJ-13 system generated within the three given energy intervals. The top state marks the global minimum, which consists of a perfect icosahedron with one particle in the center. }
    \label{fig:states}
    \end{center}
     \vskip -0.1in
\end{figure}

\begin{table}[h]
\caption{Count of unique minima states discovered: displayed are means and standard deviation over 10 independent rounds.}
\label{tbl:found-minima}
\vskip 0.15in
\begin{center}
\begin{small}
\begin{sc}
\begin{tabular}{lccccc}
\toprule
& & $u(x)$ \\
Method & $(-70, -60)$ & $(-80, -70)$ & $(-\infty, -80)$\\

\midrule

Training    & $0$& $7$& $0$ \\
% \midrule
Short        & $2.70\pm3.80$& $7.70\pm3.23$ & $0.90\pm0.30$ \\
Long   & $64.60\pm6.11$& $48.60\pm4.13$ & $1.00\pm0.00$ \\
% MCMC-long   & $35.80\pm10.68$& $45.30\pm3.10$ & $1.00\pm0.00$ \\
EQ-Flow         & $38.30\pm2.49$& $41.50\pm2.50$ &$1.00\pm0.00$\\

% Non-EQ     & $10.7\pm2.83$& $11.7\pm1.00$ &$1.00\pm0.00$\\\
\end{tabular}
\end{sc}
\end{small}
\end{center}
\vskip -0.1in
\end{table}

\section{Discussion}

We presented a construction principle to incorporate symmetries of densities defined over $\mathbb{R}^{n}$ into the structure of normalizing flows. We further demonstrated the superior generalization capabilities of such symmetry-preserving flows compared to non-symmetry-preserving ones on two physics-motivated particle systems, which are difficult to sample with classic methods. Our proposed equivariant gradient field utilizing a simple mixture potential has several structural advantages over black box CNFs, such as an analytically computable divergence, explicit handling of numerical stability and very few parameters.

\section*{Acknowledgements}
We acknowledge funding from European Commission (ERC CoG 772230), Deutsche Forschungsgemeinschaft (SFB1114/A04, DAEDALUS/P04), MATH+ The Berlin Mathematics Research Center (AA1-6, EF1-2).

We further thank Moritz Hoffmann (FU Berlin), Pim de Haan (UvA Amsterdam / Qualcomm) and Onur Caylak (TU Eindhoven) for helpful remarks and discussions.

\clearpage

\bibliography{example_paper}
\bibliographystyle{icml2020}

\clearpage
\newpage
\onecolumn
\appendix
{\Large \textbf{Supplementary Material}}

\section{Proofs and derivations}
\subsection{Proof of thm. \ref{thm:sufficient-criterion-classic-flow}}

Let $V=\mathbb{R}^{n}$ and $\rho \colon V \rightarrow \mathbb{R}_{\geq 0}$ be a probability density on $V$. Let $G$ be a group acting on $V$ and let $R \colon G \rightarrow GL(n),g \rightarrow R_g$ be a representation of $G$ over $V$. As $V$ is finite-dimensional every $R_g$ is represented by a matrix and thus $\det R_g$ is well-defined. Furthermore, for a function $f \in C^{1}(\mathbb{R}^n, \mathbb{R}^m)$ let $J_{f}(x) \in \mathbb{R}^{n \times m}$ denote its Jacobian evaluated at $x$ and define the push-forward density of $\rho$ along a diffeomorphism $f \in C^{1}(V, V)$ by $\rho_{f}(x) := \rho(f^{-1}(x)) \left| \det J_{f^{-1}}(x) \right|$.

\begin{lemma}
    Let $A \in GL(n)$, if $\rho(A x) = \rho(x)$ for all $x \in V$, then $\det A \in \left\{ -1, 1\right\}$
\end{lemma}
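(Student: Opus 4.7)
The plan is to use the normalization of the density $\rho$ as a probability density together with the standard change-of-variables formula applied to the linear map $A$, and then invoke the $A$-invariance of $\rho$ to cancel the transformed density against the original.

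Concretely, I would first write $1 = \int_{V} \rho(y)\, dy$ using that $\rho$ is a probability density. Then I would perform the change of variables $y = Ax$ (valid since $A \in GL(n)$ is a linear diffeomorphism), which yields $dy = |\det A|\, dx$ and gives
\[
1 = \int_{V} \rho(y)\, dy = \int_{V} \rho(Ax)\, |\det A|\, dx = |\det A| \int_{V} \rho(Ax)\, dx.
\]
Next, I would apply the invariance hypothesis $\rho(Ax) = \rho(x)$ inside the remaining integral to obtain
\[
1 = |\det A| \int_{V} \rho(x)\, dx = |\det A|,
\]
using once more that $\rho$ integrates to $1$. Hence $|\det A| = 1$, and since $\det A \in \mathbb{R}$ this forces $\det A \in \{-1, 1\}$.

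There is essentially no obstacle here; the only subtlety is bookkeeping the direction of the substitution and the absolute value on the Jacobian, which is why we cleanly recover only $|\det A| = 1$ rather than $\det A = 1$. I would mention in passing that this is exactly why equivariance under a general group $G$ can introduce orientation-reversing transformations (e.g.\ reflections in $O(n)$ or odd permutations), motivating the $|\cdot|$ appearing later in the change-of-variables formula for push-forward densities $\rho_f$.
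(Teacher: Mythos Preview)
Your proof is correct and follows essentially the same approach as the paper: both use the change-of-variables formula for the linear map $A$ together with the normalization $\int_V \rho = 1$ and the invariance $\rho(Ax)=\rho(x)$ to conclude $|\det A|=1$. The only cosmetic difference is the direction of the substitution (the paper writes $y = A^{-1}x$ while you write $y = Ax$), which is immaterial.
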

\begin{proof}
    Set $a\colon V \to V, x \mapsto A x$. By substituting $y = a^{-1} x$ we get
    \begin{align*}
        1 &= \int_{V} \rho(x) dx \\
        &= \int_{a^{-1} (V)} \rho(a(y)) \left| \det A \right| dy\\
        &= \int_{V} \rho(y) \left| \det A \right| dy\\
        &= \left| \det A \right| \underbrace{\int_{V} \rho(z) dy}_{=1} \\
        &= \left| \det A \right|% & \qedhere
    \end{align*}
\end{proof}

%\begin{proof}
Let $G>H$ and $h \in H$. From Lemma 1 we get $\det R_h \in \left\{-1, 1\right\}$ for each $h \in H$.  Define the transformation $T_{h} \colon V \rightarrow V, ~ x \mapsto R_h x$.
If $f \in C^{1}(V, V)$ is $H$-equivariant, it means $f \circ T_{h} = T_{h} \circ f$ for each $h \in H$. If $\rho$ is an $G$-invariant density it means $\rho \circ T_g = \rho$.  Together with the lemma we obtain

\begin{align*}
    \rho_{f}(R_h x)
    &= \rho_{f}(T_{h}(x)) \\
    &=  \rho_{f}(T_{h}(x)) \underbrace{\left| \det J_{T_{h}} (x) \right|}_{=\left| \det R_h \right| = 1}\\
    &= \rho_{T_{h^{-1}} \circ f}(x) \\
    &= \rho_{f \circ T_{h^{-1}}} (x)\\
    &= \rho((T_h \circ f^{-1})(x)) \left| \det J_{T_h \circ f^{-1}}(x) \right| \\
    &= (\rho \circ T_h \circ f^{-1})(x) \left| \det J_{T_h}(f^{-1}(x)) J_{f^{-1}}(x) \right| \\
    &= (\rho\circ f^{-1})(x) \left| \det J_{T_h}(f^{-1}(x)) \right| \left| \det J_{f^{-1}}(x) \right| \\
    &= \rho( f^{-1}(x)) \left| \det R_h \right| \left| \det J_{f^{-1}}(x) \right| \\
    &= \rho( f^{-1}(x)) \left| \det J_{f^{-1}}(x) \right| \\
    &= \rho_{f}(x) &\qedhere
\end{align*}
%\end{proof}

\subsection{Proof of thm. \ref{thm:sufficient-criterion-continuous-flow}}

\begin{proof}
Let $h \in H$ and $R_h$ be its representation. Let $v$ be an $H$-equivariant vector field. Then
\begin{align*}
    F_{v,T}(R_h z) 
    &= R_h x_{v, z}(0) + \int_{0}^{T}  dt ~  v(R_h x_{v,z}(t), t) \\
    &= R_h x_{v, z}(0) +  \int_{0}^{T}  dt ~  R_h v(x_{v,z}(t), t)\\
    &= R_h \left( x_{v, z}(0) + \int_{0}^{T}  dt ~  v(x_{v,z}(t), t) \right).
\end{align*}
This implies that the bijection $F_{v,T}$ for each $T\in [0, \infty)$  given by solving 
\begin{align*}
    x_{v, z}(0) &= z \\
    \frac{d}{d t} x(t)&= v(x_{v,z}(t), t)
\end{align*}
is $H$-equivariant. 
\end{proof}

\subsection{Invariant prior density}

Subtracting the CoM of a system $x \in \mathbb{R}^{N \cdot D}$ and obtaining a CoM-free $\tilde{x}$, can be considered a linear transformation
$$ \tilde{x} = A x $$
with 
$$ 
A = \textbf{I}_{D} \otimes \left(\textbf{I}_{N} - \tfrac{1}{N} \textbf{1}_{N} \textbf{1}_{N}^{T}\right) 
$$
where $\textbf{I}_{k}$ is the $k \times k$ identity matrix and $\textbf{1}_{k}$ the $k$-dimensional vector containing all ones.

$A$ is a symmetric projection operator, i.e. $A^2 = A$ and $A^T = A$. Furthermore $\text{rank}\left[A\right] = (N-1) D$. Finally, we have $Ay = y$ for each $y \in U$.

If we equip $\mathbb{R}^{n}$ with an isotropic density $\rho = \mathcal{N}(\mathbf{0}, \mathbf{I}_{n})$, this implies the subspace density $\tilde \rho = \mathcal{N}(\mathbf{0}, A \mathbf{I}_{n} A^{T}) = \mathcal{N}(\mathbf{0}, A A^{T})$.
Thus, sampling from $\rho$ and projecting by $A$ achieves sampling from $\tilde \rho$ trivially. On the other hand, if we have $y \in U$, then $\| y \|^{2}_{2} = \| A y \|^{2}_{2}$ and thus $\rho(y) = \tilde\rho(y)$.

If $f$ is an equivariant flow w.r.t. symmetries (S1-3) we see that any CoM-free system is mapped onto another CoM-free system and thus defines a well-defined flow on the subspace spanned by $A$. 

\subsection{Derivations for the RBF gradient field}

We first show that $v$ as defined in \eqref{eq:gradient-field} is indeed a gradient field. Define
\begin{align}
    \alpha(x, a, b) = \sqrt{\frac{\pi a b^2}{2}} \text{erf}\left(\frac{x - b}{\sqrt{2 a}}\right) - a \exp\left( - \frac{(x-b)^2}{2 a} \right),
\end{align}
where $\text{erf}$ denotes the Gaussian error function. Then we have
\begin{align}
    \frac{\partial \alpha(x, a, b)}{\partial x} = \exp\left(- \frac{(x- b)^2}{2 a}\right) \cdot x.
\end{align}
Now by setting 
\begin{align}
    \kappa(d) = \frac{1}{2} \left( \begin{array}{ccc}
    \alpha(d, \mu_{1}, \sigma_{1}) & \ldots & \alpha(d, \mu_{M}, \sigma_{M})
    \end{array}\right) 
\end{align}
and
\begin{align}
    \tilde \Phi(d_{ij}, t) = R(t) W \kappa(d_{ij})
\end{align}
we obtain
\begin{align}
    \frac{\partial \tilde \Phi(d_{ij}, t)}{ \partial x_{i}} &= R(t) W \frac{\partial \kappa(d_{ij})}{\partial d_{ij}} \frac{\partial d_{ij}}{\partial x_{i}} \\
    &= R(t) W \frac{\partial \kappa(d_{ij})}{\partial d_{ij}} \frac{r_{ij}}{d_{ij}} \\
    &= \frac{1}{2} R(t) W K(d_{ij}) r_{ij}
\end{align}
where 
\begin{align}
    K(d) = \left( \begin{array}{ccc}
    \exp\left(- \frac{(d- \mu_{1})^2}{2 \sigma_{1}}\right) & \ldots & \exp\left(- \frac{(d- \mu_{M})^2}{2 \sigma_{M}}\right)
    \end{array}\right).
\end{align}
Similarly, we have
\begin{align}
\frac{\partial \tilde \Phi(d_{ji}, t)}{ \partial x_{i}} &= - \frac{1}{2}  R(t) W K(d_{ji}) r_{ji} \\
&= \frac{1}{2}  R(t) W K(d_{ij}) r_{ij}.
\end{align}
Thus, we obtain
\begin{align}
    \left( \nabla_{x}\Phi(x, t) \right)_{i} &= \frac{\partial \Phi(x, t)}{\partial x_{i}} \\
    &= \sum_{lj} \frac{\partial \tilde \Phi(d_{lj}, t)}{\partial x_{i}} \\
    &= \sum_{j} R(t) W K(d_{ij}) r_{ij} \\
    &=\sum_{j}\phi(d_{ij}) r_{ij}
\end{align}
Finally, by using $v(x, t) = \tfrac{\partial x(t)}{\partial t} = \nabla_{x} \Phi(x, t)$ we can compute the divergence as
\begin{align}
    \divergence{v(x, t)} 
    &= \trace{ \frac{ \partial v(x, t) }{\partial x}} \\
    &= \sum_{i} \trace{\frac{\partial v_{i}(x, t)}{\partial x_{i}}}\\
    &= \sum_{ij} \trace{ \frac{\partial \phi(d_{ij})}{\partial d_{ij}} r_{ij} \frac{\partial d_{ij}}{\partial x_{i}}^T + \phi(d_{ij}) \frac{\partial r_{ij}}{\partial x_{i}} } \\
    &= \sum_{ij} \frac{\partial \phi(d_{ij})}{\partial d_{ij}} \trace{ r_{ij} \frac{r_{ij}}{d_{ij}} ^T }  + \phi(d_{ij}) \trace{ I_{D \times D} } \\
    &= \sum_{ij} \frac{\partial \phi(d_{ij})}{\partial d_{ij}} \frac{r_{ij}^T r_{ij}}{d_{ij}}    + \phi(d_{ij}) D \\
    &= \sum_{ij} \frac{\partial \phi(d_{ij})}{\partial d_{ij}} d_{ij}  + \phi(d_{ij}) D.
\end{align}

% \textbf{Correction:} the given formula in the manuscript for the density change on the sub-space is not correct in general and not necessary for the current implementation using CNFs. This is a residue of an earlier version, where density transformation was conducted using rotation-equivariant coupling layers with isotropic scaling in which this equality holds. We will thus remove the density correction and this remark in the final manuscript.

\section{Additional experiments}

\subsection{Comparison with other equivariant gradient flows}
Next to the equivariant flow as proposed in the main text (furthermore referred to as \textit{Kernel Flow}) we experimented with two neural-network-based equivariant flow architectures relying on the CNF framework. Both performed inferior in terms of accuracy and execution time. 

One ad-hoc way to model a rotation and permutation invariant potential $\Phi$ matching the given target systems could be given by only feeding pairwise particle distances to $\Phi$. Similarly to the architecture presented in the main text, we can model $\tilde \Phi(d_{ij}(t),t)$, where $\Phi(x(t)) = \sum_{ij} \tilde \Phi(d_{ij}(t),t)$ and then take its gradient field for the CNF. To this end we embed the distances with Gaussian RBF-kernels and use a simple multi-layer perceptron to obtain a scalar output for each embedded distance. %The invariant potential is then given by the sum over the outputs.
We refer to this flow as \textit{simple gradient flow}.

Another invariant, possibly more complex, potential $\Phi$ can be obtained by using a molecular message passing architecture similar to \textit{SchNet} \cite{schutt2017schnet}. We refer to this flow as \textit{gradient flow with SchNet}. For the implementation details see \ref{sec:eq-gradient-flows}.

As before, the Hutchinson estimator is used during training, while sampling and reweighing is done brute-force. 

In this first additional experiment we compare training / sampling wall-clock-times for the equivariant gradient flows and the kernel flow. To this end we measure the time per iteration for training / sampling with a batch / sample size of $64$.
Our results show that kernel flows are considerably faster during training and at least one order of magnitude faster during sampling  (\tableref{tbl:iteration-time}).
This speedup during training is mainly rooted in  AD required to calculate $\nabla_{x(t)} \Phi(x(t))$. The overhead during sampling is a consequence of brute-force computing $\Delta_{x(t)} \Phi(x(t))$.

\begin{table}[ht]
\caption{Training and sampling wall-clock-times in seconds: displayed are means and standard deviation over 10 epochs with 10 iterations each and a batch size of $64$.}
\label{tbl:iteration-time}
\vskip 0.15in
\begin{center}
\begin{small}
\begin{sc}
\begin{tabular}{lcccc}
\toprule
& \multicolumn{2}{c}{Training}& \multicolumn{2}{c}{Sampling}\\
Model & DW-4 & LJ-13  & DW-4 & LJ-13 \\

\midrule

kernel flow    & $\mathbf{0.879 \pm 0.012 }$& $\mathbf{0.498 \pm 0.010}$& $\mathbf{0.222 \pm 0.003}$& $\mathbf{0.178 \pm 0.002}$\\
simple gradient flow & $2.58 \pm 0.091$& $ 1.760 \pm 0.056$  & $4.18 \pm 0.112$ & $6.670 \pm 0.269$\\
gradient flow with SchNet  & $ 6.140 \pm 0.583$& $ 4.770 \pm 0.500$&  $7.690 \pm 0.018$& $25.400 \pm 3.690$\\
% simple gradient flow brute force& $16.10 \pm 2.03$& $xx\pm xx$  \\
% gradient flow with SchNet brute force& $50.60\pm 4.22$& $xx\pm xx$\\
\end{tabular}
\end{sc}
\end{small}
\end{center}
\vskip -0.1in
\end{table}

\subsection{Density estimation with coupling flows and other equivariant gradient flows}
In a second additional experiment we compare density estimation of the equivariant flow as proposed in the main text to other equivariant gradient flows and flows based on non-equivariant coupling layers. For the coupling layers we use real NVP (rNVP) transformations as introduced by \cite{dinh2016density} and used for Boltzmann Generators in \cite{noe2019boltzmann}. The experiment follow section \ref{sec:density-estimation}. 
As coupling layers do not conserve the CoM we add a Gaussian distributed CoM to the training data. We further use a Gaussian prior with the same CoM in the latent space. Equivariant / CoM-preserving flows will not change the CoM. Thus running an equivariant flow on the CoM-free system and then adding then energy of the CoM perturbation yields the negative log-likelihood corresponding to running the rNVP flow on the perturbed system.

The other equivariant gradient flows achieve similar albeit slightly larger log-likelihoods on the training and test set (\figref{fig:density_estimation_eq}).  
%Additionally, the flow with the more complicated potential achieves slightly better results than the flow with the simple potential. 
The non equivariant flow based on rNVP achieves a log-likelihood on the test similar to the non equivariant nODEs  (\figref{fig:density_estimation_rnvp}) when data augmentation is applied. Without data augmentation the log-likelihood of the rNVP between train and test set differs in the range of $\sim 200$.

\begin{figure}[ht]
    \vskip 0.2in
    \begin{center}
    \centerline{\includegraphics[width=0.8\columnwidth]{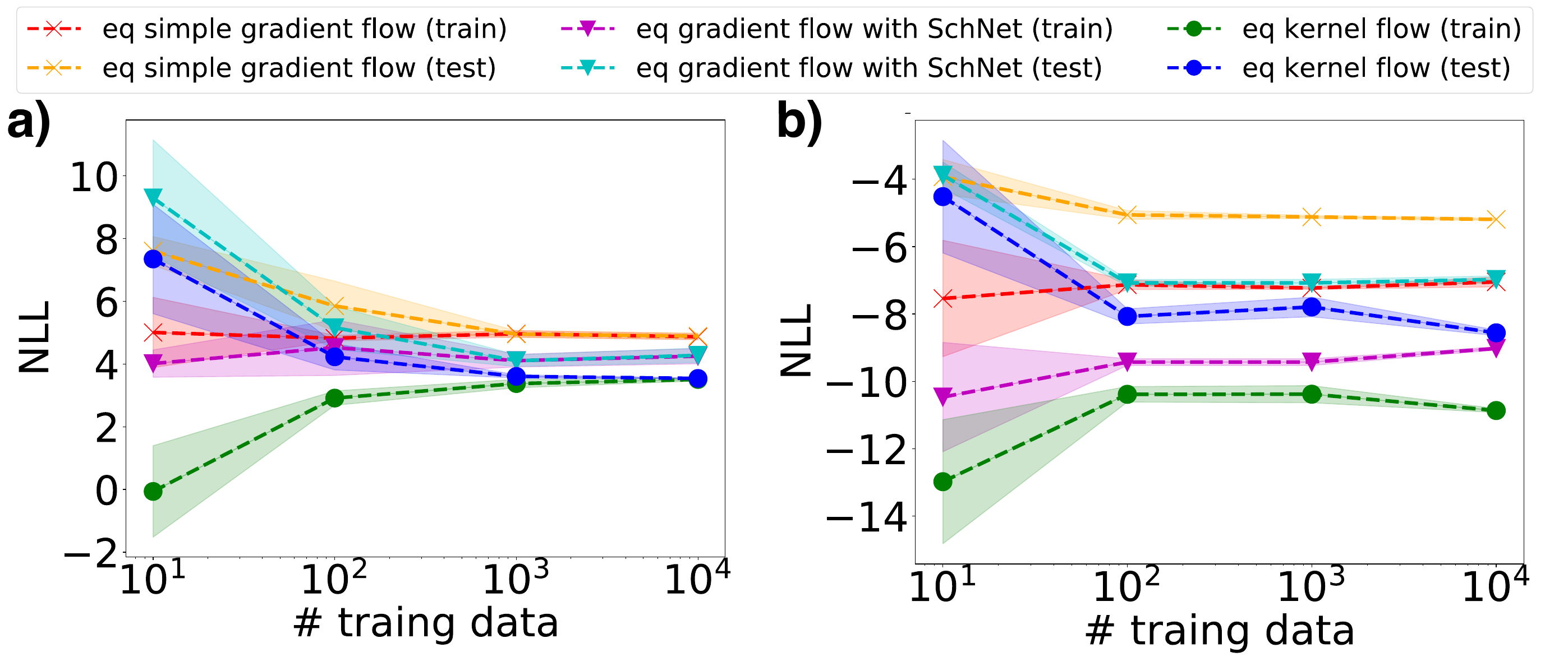}}
    \caption{Log-likelihood on train and test data for both \textbf{a)} the DW-4 and \textbf{b)} the LJ-13 system after training on an increasing number of data points. All equivariant flows generalizes quickly to unseen trajectories, but the kernel flow achieves the lowest log-likelihoods on the train and test data.}
    \label{fig:density_estimation_eq}
    \end{center}
     \vskip -0.2in
\end{figure}

\begin{figure}[ht]
    \vskip 0.2in
    \begin{center}
    \centerline{\includegraphics[width=0.7\columnwidth]{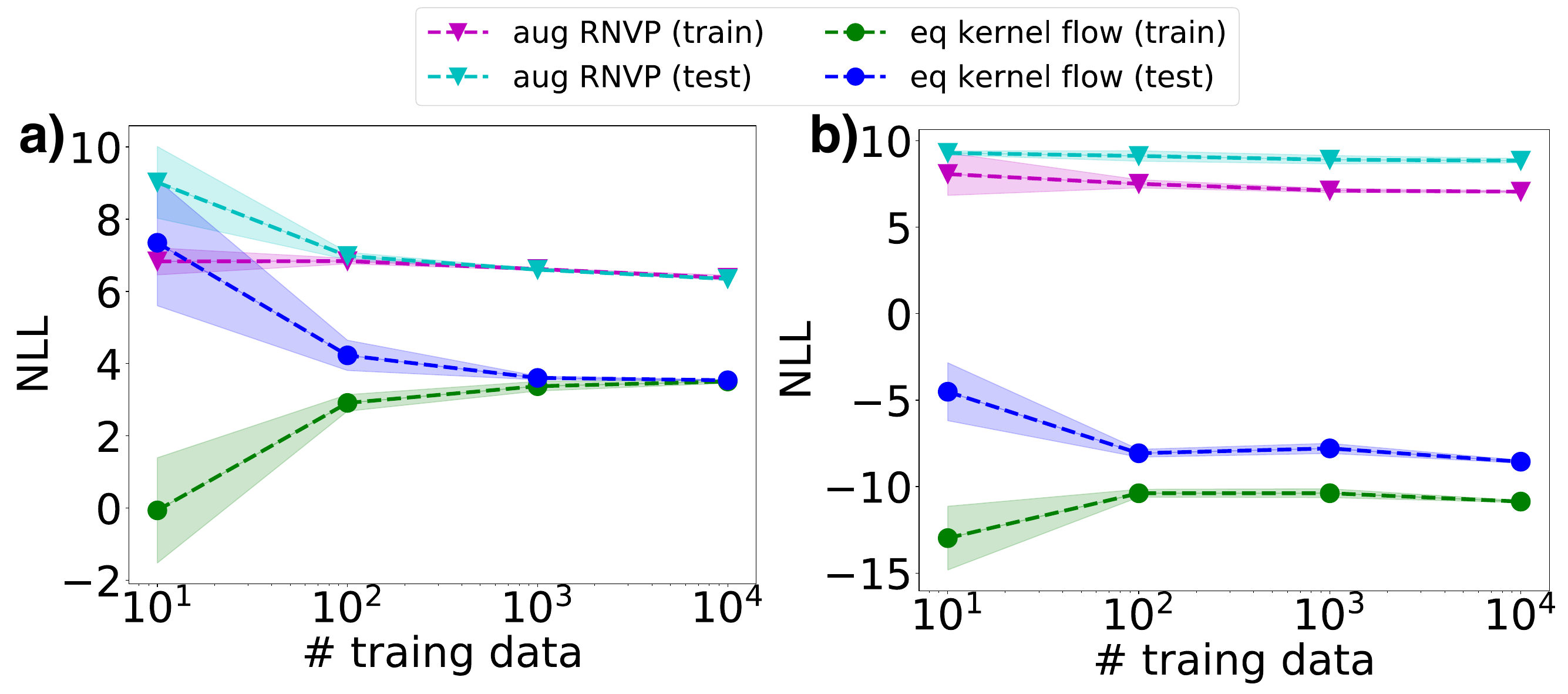}}
    \caption{Log-likelihood on train and test data for both \textbf{a)} the DW-4 and \textbf{b)} the LJ-13 system after training on an increasing number of data points. The non-equivariant coupling flow with data augmentation (\textit{aug RNVP})  performs significantly worse compared to the equivariant flow on both test systems. For the LJ-13 system it is even  unable to fit the augmented data at all and remains close to the prior.}
    \label{fig:density_estimation_rnvp}
    \end{center}
     \vskip -0.2in
\end{figure}

\subsection{Energy based training for the DW-4 system}
This experiment follows a similar training procedure as done in \cite{noe2019boltzmann}. The models are pretrained with \textit{ML-training} on $1000$ samples from a long MCMC trajectory. Then the we use a combination of \textit{ML-} and \textit{KL-training} (see sec \ref{sec:related-work} bottom or \cite{noe2019boltzmann}), where we increase the proportion of the KL-loss from $\lambda= 0$ to $\lambda=0.5$ over the course of training. Aftwerwards we sample $10,000$ samples and compare their energy and reweighted energy to the expected energy of the samples from the long MCMC trajectory (\figref{fig:energies}). The non-equivariant models without data augmentation are capable of producing samples with low energies, but the reweighing fails in these cases (\figref{fig:energies}~d, f). This is due to mode collapsing onto a small part of the target space, i.e. most of the produced samples are only from a single rotation/permutation of a certain configuration. Hence, a random sample lying in another region will yield a very large reweighing weight. This behaviour was also observed in \cite{noe2019boltzmann} and gets worse with larger proportions of the KL-loss. The non-equivariant models with data augmentation produce samples with high energies making reweighing difficult as well (\figref{fig:energies}~e, g). These models are prone to mode collapse as well if the proportion of the KL-loss becomes larger than $\lambda=0.5$. We thus chose $\lambda=0.5$ throughout the reported experiments.  
In contrast, the equivariant models (kernel flow, simple gradient flow and the gradient flow with SchNet) produce low energies and allow for correct reweighing (\figref{fig:energies}~a, b, c).

\begin{figure}[ht]
    \vskip 0.2in
    \begin{center}
    \centerline{\includegraphics[width=\columnwidth]{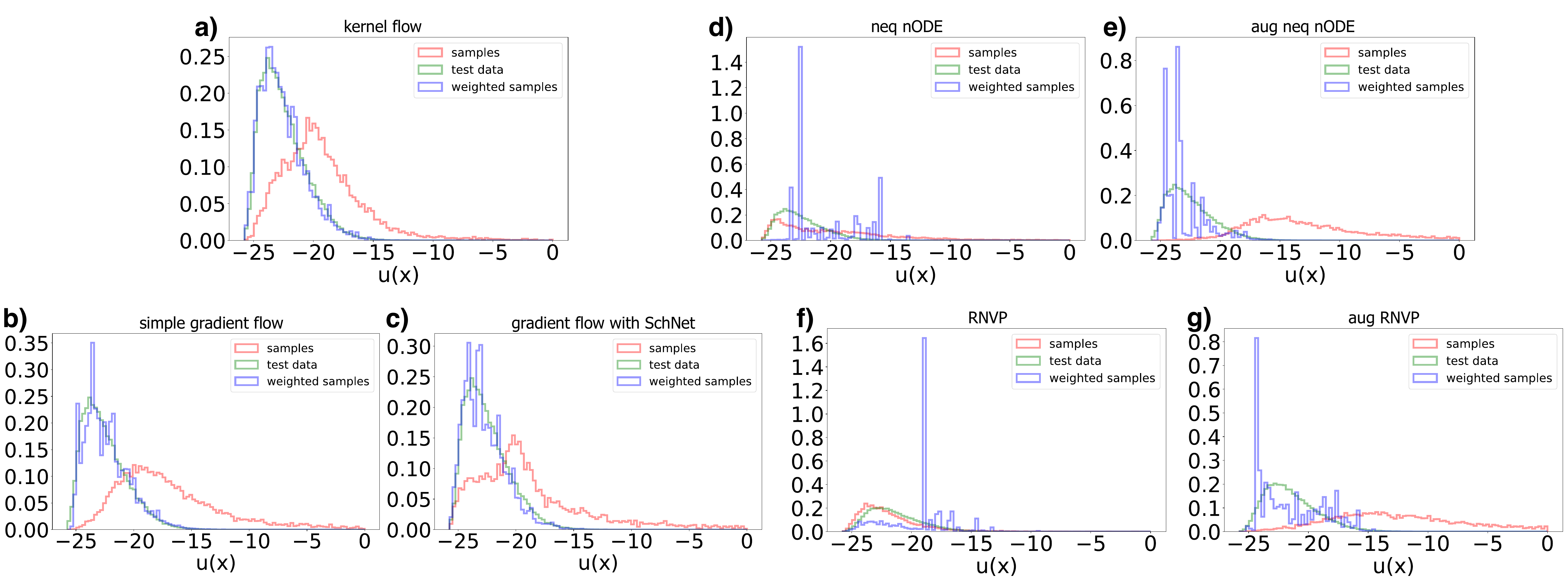}}
    \caption{Energy histograms for samples from the DW-4 system with different models.}
    \label{fig:energies}
    \end{center}
     \vskip -0.2in
\end{figure}

\subsection{Visualisation of the weights of the kernel flow}

Due to the simple structure of the kernel flow we can visualize the learned dynamics of the equivariant flow, by plotting $W$ after training (\figref{fig:learned_dynamics}).

\begin{figure}[ht]
    \vskip 0.2in
    \begin{center}
    \centerline{\includegraphics[width=\columnwidth]{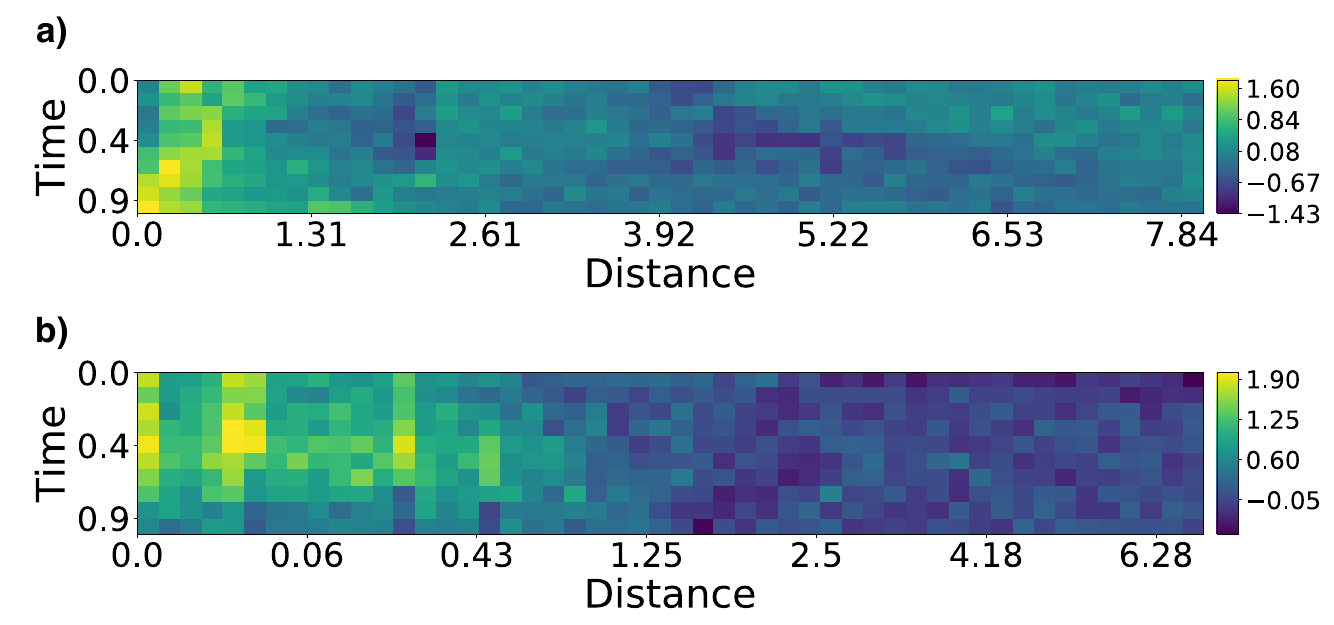}}
    \caption{Kernel weights $W$ visualized after training for the  \textbf{a)} DW-4 system and the \textbf{b)} LJ-13 system. }
    \label{fig:learned_dynamics}
    \end{center}
     \vskip -0.2in
\end{figure}

\section{Technical details}
In this section we show the hyperparameters and optimization details used for the experiments presented in this work.
For all experiments we used ML-training to train the models on the given training data ($\lambda=0$) if not stated otherwise. For the discovery of new meta-stable states (\ref{sec:Discovery-of-new-meta-stable-states}) for the \textit{DW-4} system, where we used a combination of  \textit{ML-training} and \textit{KL-training} with $\lambda = 0.5$ after pretraining both models with \textit{ML-training}.

\subsection{Equivariant kernel flow}
For the \textit{DW-4} system we fixed $50$, $10$ kernel means $\mu_{K,l}$, $\mu_{R,l}$ equispaced in $[0, 8]$, $[0,1]$ for distances and times respectively. The bandwidths $\gamma_{K, l}, \gamma_{R, l}$ of the kernels have been initialized with $0.5$, $0.3$ and were optimized during the training process. The total model ended up having $620$ trainable parameters.

For the \textit{LJ-13} system we fixed $50$ kernel means $\mu_{K,l}$ in $[0, 16]$ concentrated around $r_{m}=1$ with increasing distance to each other towards the interval bounds. Similarly bandwidths $\gamma_{K, l}$ are initialized narrowly close to $r_{m}$ increasing towards the interval bounds. We placed the $10$ kernels $\mu_{R, l}$  for the time-dependent component equispaced in $[0, 1]$. The bandwidths $\gamma{R, l}$ where initialized with narrower bandwidths around $t=0.5$ and smearing out the closer they reach the interval boundaries. Again bandwidths were optimized during the training process. This resulted in a total of $620$ trainable parameters. 

As regularization is important to efficiently train our architecture using fixed step-size solvers our models were optimized using \texttt{AdamW}, a modified implementation with fixed weight-decay \citep{loshchilov2017fixing} using a learning rate of $0.005$, weight decay of $0.01$ and a batch size of $64$ samples until convergence.
% The number
%, which about $1000$ iterations.

\subsection{Equivariant gradient flows}\label{sec:eq-gradient-flows}
We used the same distance embedding for both equivariant gradient flows.

For the \textit{DW-4} system we fixed $50$ rbf-kernel means equispaced in $[0, 8]$. The bandwidths have been initialized with $0.5$ and were not optimized during the training process.

For the \textit{LJ-13} system we fixed $50$ rbf-kernel means in $[0, 16]$ concentrated around $r_{m}=1$ with increasing distance to each other towards the interval bounds. Similarly bandwidths $\gamma_{K, l}$ are initialized narrowly close to $r_{m}$ increasing towards the interval bounds and were not optimized during the training process..

For the equivariant gradient flows with the simple potential the transformation of each embedded distance was modeled with a dense neural network with layer sizes $[50, 64, 32, 1]$ and \texttt{tanh} activation functions. This resulted in a total of $5377$ trainable parameters.

For the equivariant gradient flows with the SchNet \cite{schutt2017schnet} inspired potential we used $16$ features and 3 interaction blocks. For the feature encoding we used a simple network with layer sizes $[1, 16]$. For the continuous convolutions in the three interaction blocks we used dense neural networks with layer sizes $[50, 32, 32, 16]$ and \texttt{tanh} activation functions. Finally, a dense neural networks with layer sizes $[16, 8, 4, 1]$ was used to compute the invariant energy after the interaction blocks. This resulted in a total of $9857$ trainable parameters.

\subsection{Non equivariant nODE flow}
For the \textit{DW-4} system we used a dense neural network with layer sizes  $[64, 64]$ and \texttt{tanh} activation functions.
This resulted in a total of  $5256$  trainable parameters.

For the \textit{LJ-13} system we used a dense neural network with layer sizes $[64, 128, 64]$ and \texttt{tanh} activation functions.
This resulted in a total of $21671$ trainable parameters. For the optimization we used \texttt{AdamW} with a learning rate of $0.005$. 
We optimized the model with a batch size of $64$ samples until convergence. 

\subsection{MCMC trajectories}
For each system, a training and a test trajectory were obtained with Metropolis Monte-Carlo, where we optimized the width of the Gaussian proposal density by maximizing $\alpha \cdot s$, with $\alpha$ being the acceptance rate computed from short trajectories and $s$ the Gaussian standard deviation (step size). The optimal step sizes are $s=0.5$ for the $DW-4$ system and $s=0.025$ for the $LJ-13$ system. 
To ensure that all samples steam from the equilibrium distribution we discard a large number of initial samples. For the \textit{DW-4} system  the initial $1000$ samples are discarded, while we discard $20000$ for the \textit{LJ-13} system. 

\subsection{Non equivariant coupling flows}
For the \textit{DW-4} system we used $8$ coupling blocks. For the translation transformation we used a dense neural network with layer sizes $[4, 64, 64, 4]$ and \texttt{ReLU} activation functions. For the scaling transformation we used a dense neural network with layer sizes $[4, 64, 64, 4]$ and \texttt{tanh} activation functions.
This resulted in a total of $21576$ trainable parameters.

For the \textit{LJ-13} system we used $16$ coupling blocks. For the translation transformation we used a dense neural network with layer sizes $[19/20, 64, 64, 20/19]$ and \texttt{ReLU} activation functions. For the scaling transformation we used a dense neural network with layer sizes $[19/20, 64, 64, 20/19]$ and \texttt{tanh} activation functions. The number of input and output neurons for each network is either $19$ or $20$ due to the uneven number of total dimensions.  
This resulted in a total of $215680$ trainable parameters.

%Furthermore, all five meta-stable states of the \textit{DW-4 system} are only explored by  long MCMC trajectories.

\subsection{Benchmark systems}
Throughout all experiments we chose the same parameters for our two benchmark systems. 

For the \textit{DW-2} / \textit{DW-4} system we chose $a=0, b=-4, c=0.9, d_0=4$ and a dimensionless temperature factor of $\tau=1$.

For the \textit{LJ-13} system we chose $r_m=1, \epsilon=1$ and a dimensionless temperature factor of $\tau=1$.

\subsection{Error bars}
Error bars in all plots are given by one standard deviation.

In \figref{fig:trace} a) we show errors for $1000$ estimations per particle count.
In \figref{fig:trace} b) errors are displayed for $100$ reweighed bootstrapped sub-samples.
In \figref{fig:trace} c) time was measured for $100$ estimations per particle count per method. 

In \figref{fig:otd_dto} a) we show $3$ runs per method.

In \figref{fig:density_estimation}, \figref{fig:density_estimation_eq}, and \figref{fig:density_estimation_rnvp} we show $5$ runs per model/system/training set size.

\subsection{Computing infrastructure}
All experiments were conducted on a \textit{GeForce GTX 1080 Ti} with 12 GB RAM.

\newpage
% \appendix
% \section{Do \emph{not} have an appendix here}

% \textbf{\emph{Do not put content after the references.}}
% %
% Put anything that you might normally include after the references in a separate
% supplementary file.

% We recommend that you build supplementary material in a separate document.
% If you must create one PDF and cut it up, please be careful to use a tool that
% doesn't alter the margins, and that doesn't aggressively rewrite the PDF file.
% pdftk usually works fine. 

% \textbf{Please do not use Apple's preview to cut off supplementary material.} In
% previous years it has altered margins, and created headaches at the camera-ready
% stage. 
% %%%%%%%%%%%%%%%%%%%%%%%%%%%%%%%%%%%%%%%%%%%%%%%%%%%%%%%%%%%%%%%%%%%%%%%%%%%%%%%
% %%%%%%%%%%%%%%%%%%%%%%%%%%%%%%%%%%%%%%%%%%%%%%%%%%%%%%%%%%%%%%%%%%%%%%%%%%%%%%%

\end{document}